\newtheorem{thm}{Theorem}
\newtheorem{theorem}[thm]{Theorem}
\newtheorem{lemma}[thm]{Lemma}
\newtheorem{definition}{Definition}
\newtheorem{example}{Example}
\newtheorem{assumption}{Assumption}
\newcommand{\cB}{\mathcal{B}}
\newcommand{\cC}{\mathcal{C}}
\newcommand{\cD}{\mathcal{D}}
\newcommand{\cF}{\mathcal{F}}
\newcommand{\cI}{\mathcal{I}}
\newcommand{\cL}{\mathcal{L}}
\newcommand{\cM}{\mathcal{M}}
\newcommand{\cO}{\mathcal{O}}
\newcommand{\cP}{\mathcal{P}}
\newcommand{\cQ}{\mathcal{Q}}
\newcommand{\cR}{\mathcal{R}}
\newcommand{\cS}{{\mathcal{S}}}
\newcommand{\cX}{\mathcal{X}}
\newcommand{\EE}{\mathbb{E}}
\newcommand{\PP}{\mathbb{P}}
\title{Probably Approximately Correct Causal Discovery}
\author{\name Mian Wei \email mian.wei@duke.edu \\
      Duke University
      \AND
      \name Somesh Jha \email jha@cs.wisc.edu \\
      \addr University of Wisconsin-Madison
      \AND
      \name David Page \email david.page@duke.edu\\
      \addr Duke University}
\begin{document}

\maketitle

\begin{abstract}
The discovery of causal relationships is a foundational problem in artificial intelligence, statistics, epidemiology, economics, and beyond. While elegant theories exist for accurate causal discovery given infinite data, real-world applications are inherently \emph{resource-constrained}. Effective methods for inferring causal relationships from observational data must perform well under finite data and time constraints, where “performing well” implies achieving high, though not perfect accuracy.
In his seminal paper \textit{A Theory of the Learnable} \citep{valiant1984theory}, Valiant highlighted the importance of resource constraints in supervised machine learning, introducing the concept of Probably Approximately Correct (PAC) learning as an alternative to exact learning. Inspired by Valiant's work, we propose the Probably Approximately Correct Causal (PACC) Discovery framework, which extends PAC learning principles to the causal field. This framework emphasizes both computational and sample efficiency for established causal methods such as propensity score techniques and instrumental variable approaches. Furthermore, we show that it can provide theoretical guarantees for other widely used methods, such as the Self-Controlled Case Series (SCCS) method, which had previously lacked such guarantees.
\end{abstract}

\section{Introduction}
\label{intro}

The challenge of teaching machines to discern causal relationships rather than mere associations has become a central topic in modern statistics and computer science. This concern extends beyond machine learning, as causality has emerged as a pivotal focus across diverse fields such as medicine, statistics, and economics \citep{athey2021policy, angrist2008mostly}.
Understanding hidden causal relationships enables not only predictive tasks, answering “\emph{what}?” questions, but also counterfactual reasoning, answering “\emph{what if}?” questions. A typical example involves using electronic health records (EHRs) to determine whether a newly released drug is causing previously unrecognized adverse effects in certain patients.

Although extensive research has been conducted on methods to uncover causal relationships from observational data \citep{pearl2009causal, rosenbaum1983central, Spirtes2000, peters2017elements}, much of this work emphasizes asymptotic performance under the assumption of unlimited data.
For example, classical methods such as the Peter-Clark (PC) algorithm \citep{Spirtes2000} guarantee recovery of the correct Markov equivalence class of the true directed acyclic graph (DAG) in the large-sample limit ($n \rightarrow \infty$).
However, several studies have shown that the PC algorithm can produce incorrect outputs in finite-sample regimes due to statistical errors in conditional independence testing \citep{Kalisch2007, Spirtes2000}.
Moreover, the original PC algorithm provides no explicit guidance on how large sample must be to achieve a desired error rate in finite-sample settings, limiting its practical applicability.

In supervised machine learning, a natural approach to addressing resource constraints is to aim for high, rather than perfect accuracy. As more data and time become available, accuracy can improve accordingly. This idea is formalized in the well-known Probably Approximately Correct (PAC) learning theory \citep{valiant1984theory}, which introduces two parameters, $\epsilon$ and $\delta$, to quantify approximate learning. Specifically, under limited resources, the goal is for a learning algorithm $\cL$ to achieve error no greater than $\epsilon$ with probability at least $1 - \delta$. As a result, the required sample size should depend upon $\frac{1}{\epsilon}$ and $\frac{1}{\delta}$, since achieving smaller error demands more data and computation. PAC learning thus provides a foundational framework for balancing the trade-offs among accuracy, confidence, and resource availability.

An analogue to PAC Learning theory for the causal field should preserve the same intuition. In particular, a causal learning algorithm is expected to identify the true relationship or true treatment effect with probably approximately high accuracy. Compared to existing causal frameworks, the new theory should prioritize limited-sample performance over large-sample efficacy. It should also provide guidelines for calculating the required sample size based on desired level of accuracy. 

It is worth noting that the concept of sample size calculation is well-established in the context of power analysis for randomized controlled trials (RCTs), particularly in clinical research. This practice has driven significant advances in experimental causal inference and has provided biostatisticians with practical guidance for designing robust and efficient trials \citep{Friedman2010, Chow2008}.
The parallels between parameters such as effect size and significance level in RCTs, and $\epsilon$ and $\delta$ in PAC Learning, are also clearly evident. A similar protocol for sample size will be beneficial for other causal methods as well and, more interestingly, provide approximate guarantees for inference in observational causal tasks. To address these considerations, we propose a unified framework called Probably Approximately Correct Causal Discovery (PACC Discovery), which adapts PAC Learning to causal inquiries under data and time constraints.

Building on this foundation, we explore potential applications of the PACC Discovery framework. In PAC Learning, the theory can be used either to (1) demonstrate that an existing supervised machine learning algorithm is likely to succeed on a specific task \citep{Larsen2023BaggingPAC, Xu2023GeneralizationBounds}, or (2) motivate the development of a new supervised learning algorithm \citep{Rothfuss2020PACOH, Kumar2024DisagreeingExperts}. Analogously, PACC Discovery can be applied to (1) validate the effectiveness of an existing causal algorithm, or (2) inspire the creation of new causal discovery algorithms. In this paper, we focus on the first objective, leaving the exploration of the second for future work. To define the success of a causal algorithm, we propose that an algorithm is considered successful if it can reliably identify the true data-generating causal model from a set of competing alternatives that include incorrect or spurious causal components. Occasional errors are allowed but should occur with low probability.


\textbf{Contributions.} In this paper, we make the following contributions.
\begin{itemize}
    \item We emphasize the need for causal discovery models tailored to resource-constrained contexts and motivate the proposed PACC framework, which can provide approximate sample efficiency guarantees for observational causal tasks.
    \item In Section \ref{pacc-discovery}, we derive and formally define the PACC Discovery framework. 
    This framework not only inspires the development of new algorithms but also enrichs the theoretical understanding of existing ones.
    \item In Section \ref{sccs}, we demonstrate the application of PACC Discovery by providing the first theoretical guarantee for a variant of the Self-Controlled Case Series (SCCS) method, establishing it as a valid causal discovery algorithm.
    \item In Section \ref{pacc-on-exist}, we show how PACC Discovery can be used to model sample complexity for other causal inference approaches, including propensity score methods and instrumental variables.
\end{itemize}

\section{Related Work} 

\subsection{PAC Learning}

After Valiant established the foundation of PAC learning theory \citep{valiant1984theory}, it has grown into a fruitful and vibrant field. In its early development, Blumer et al. \citep{kearns1989cryptographic} showed that a concept class is PAC-learnable if and only if it has a finite Vapnik–Chervonenkis (VC) dimension. Kearns and Valiant \citep{kearns1994efficient} further extended PAC learning theory by demonstrating that, under standard cryptographic assumptions, some concept classes are not efficiently learnable despite being PAC-learnable, thereby establishing fundamental computational limits on what can be feasibly learned.

These theoretical insights also inspired a wave of practical algorithmic developments rooted in PAC principles. For example, Schapire’s work on the strength of weak learnability \citep{schapire1990strength} led to the creation of AdaBoost by Freund and Schapire \citep{FS97}, which combines multiple weak learners into a strong ensemble. In parallel, PAC learning theory has been successfully extended to specialized domains. For example, PAC-Bayesian methods \citep{mcallester1998some} integrate Bayesian priors with frequentist generalization guarantees and have been used to derive non-vacuous generalization bounds for deep neural networks \citep{DR17}. In reinforcement learning \citealp{dann2017unifying}, PAC theory forms the foundation of algorithms such as R-MAX \citep{BT02}, which leverages the PAC-MDP framework to guarantee near-optimal performance in Markov decision processes with high probability and polynomial sample complexity.

Building on these developments, the PACC Discovery framework extends PAC learning into the causal domain. This not only broadens the theoretical scope of PAC learning but also introduces new tools and perspectives to causal discovery, especially in resource-constrained, real-world settings.

\subsection{Causal Discovery}

Classical causal methods typically emphasize asymptotic correctness, relying on the assumption of infinite data availability. For example, constraint-based discovery algorithms such as the PC algorithm \citep{Spirtes2000} and score-based approaches like Greedy Equivalence Search (GES) \citep{Chickering2002} provide guarantees of structural recovery in the large-sample limit. Structural equation models (SEMs), including linear non-Gaussian models \citep{Shimizu2006}, similarly offer theoretical identifiability results but often lack explicit finite-sample performance guarantees. Likewise, traditional causal inference methods, such as propensity score matching \citep{rosenbaum1983central} and instrumental variable techniques \citep{Angrist1996}, generally depend on asymptotic assumptions to ensure valid inference.

Statistical refinements of the aforementioned traditional methods now incorporate finite-sample corrections, enabling explicit sample complexity bounds \citep{Kalisch2007, Wadhwa2021, yan2024foundations, wadhwa2021sample, compton2020entropic}. In parallel, modern differentiable and neural causal discovery frameworks, such as NO-TEARS \citep{Zheng2018}, reformulate causal discovery as a continuous optimization problem, achieving improved empirical performance despite typically lacking formal finite-sample guarantees.
In the domain of causal inference, doubly robust estimators \citep{Chernozhukov2018} and conformal inference methods \citep{Lei2021} explicitly control finite-sample error rates, enhancing the robustness and reliability of inference in practice.

Integrating PAC learning theory with causal methodologies represents a significant recent advancement, enabling formal guarantees on inference accuracy and robustness under resource constraints. While some existing work provides PAC-style results, these efforts are often fragmented, problem-specific, and do not offer a unified theoretical framework for causality. For example, Tadepalli and Russell \citep{Tadepalli2021} proposed a PAC framework for causal trees with latent variables, strategically combining observational data with targeted interventions. Other recent work has established finite-sample guarantees for recovering the interventional Markov equivalence class—measured in total variation distance over interventions \citep{choo2024probably}—and for hypothesis testing in causal Bayesian networks under known graph structures \citep{Acharya2018}.
In contrast, our proposed PACC Discovery framework provides a unified theoretical foundation that generalizes across a broad range of causal methodologies.

\section{PACC Discovery Framework}
\label{pacc-discovery}

\subsection{PAC Learning Preliminary}

We begin with the fundamental components of PAC learning \citep{kearns1994introduction, kearns1992toward}:
\begin{itemize}
\item A learning algorithm (or learner) $\mathcal{L}$,
\item A domain (or instance space) $\mathcal{X}$ consisting of all possible instances,
\item A fixed probability distribution $\mathcal{D}$ over $\mathcal{X}$,
\item An unknown target concept $c$ from a known concept class $\mathcal{C} \subseteq 2^\mathcal{X}$.
\end{itemize}

Here $2^\cX$ denotes the power set of $\cX$, i.e., the set of all subsets of $\cX$.
The goal of the learning algorithm $\mathcal{L}$ is to produce a hypothesis $h \in \cC$ that closely approximates the unknown target concept $c$. To this end, the learner is provided with a training sample $\cS = \{(x_1,c(x_1)), \cdots, (x_n, c(x_n))\}$ consisting of $n$ labeled examples drawn independently and identically distributed (i.i.d.) from the distribution $\cD$. After training on $\cS$, the learner outputs a hypothesis $h = \cL(\cS) \in \cC$. 

We say that $\cL$ PAC-learns the concept class $\cC$ if, for all $\epsilon >0$, and $\delta >0$, there exists a sample size $n = n(\epsilon,\delta)$ such that the following condition holds:
\[\PP_{\cS \sim \cD^n}[error(h) \leq \epsilon] \geq 1-\delta\]
where $error(h)$ measures the distance between the output hypothesis $h$ and the target concept $c$. A common definition of this error is: $error(h) = \PP_{x \sim \mathcal{D}}[h(x) \neq c(x)]$ \citep{kearns1992toward}.

\subsection{Instance Space and Causal Model}

In modifying the PAC learning framework to a PAC Causal framework, we draw inspiration from Pearl's Causal Review \citep{pearl2009causal}, where he observed that “such questions are causal questions because they require some knowledge of the data-generating process.” 
For example, a typical causal question in RCTs is whether taking a drug or receiving an exposure $Z$ changes the probability of a subsequent outcome or event $Y$. 
The corresponding data-generating process can be represented as a joint probability distribution consisting of (i) a probability distribution over patient covariates; (ii) a uniform Bernoulli distribution governing the assignment of treatment $Z$; and (iii) a conditional probability distribution over $Y$, given the covariates and treatment value $Z$. 
The same causal question can also be investigated using observational data, such as EHRs or health insurance claims. However, in the observational setting, the joint distribution may differ, particularly in the distribution governing treatment assignment.

More generally, the data-generating process in a causal setting can be described using a variety of models, including Bayesian networks, structural equation models, point process models, and randomized Turing machines. Despite their differences, all of these frameworks define probability distributions over some domain. To unify these representations, we generalize them as causal models over an abstract instance space $\cI$, formally defined as follows:

\begin{definition}[Instance Space] 
Let the instance space $\cI$ be the set of all possible examples considered in a causal problem. Each element $\xi \in \cI$ represents a complete configuration of variables relevant to the problem, such as covariates, treatments, and outcomes. 
\end{definition}

\begin{example}[Illustrative Instance Spaces]

Several common forms of instance spaces include:

\begin{itemize}
    \item[2.1] Assignments to $n$ binary variables, represented as $[0,1]^n$.
    \item[2.2] Strings over an alphabet of $n$ symbols. For example, the symbols may represent medical event types, and a string may correspond to a patient's sequential medical record.
    \item[2.3] Same as 2.2, but with real-valued times attached to events.
    \item[2.4] Assignments to $n$ real-valued variables, represented as $\mathbb{R}^n$, i.e., points in $n$-dimensional Euclidean space.
\end{itemize}
\end{example}

\begin{definition}[Causal Model]
Given an instance space $\cI$, a causal model $\cM$ over $\cI$ is defined as a probability distribution over $\cI$.
\end{definition}

\begin{example}[Illustrative Causal Models]
Examples of causal models corresponding to the instance spaces introduced earlier include:

    \begin{itemize}
    \item[4.1] A Bayesian Network over 
    $n$ binary variables, where the causal structure is represented by a DAG. Each node corresponds to a variable and is associated with a conditional probability table (CPT).
    \item[4.2] A probabilistic grammar or a randomized Turing machine generating strings over an alphabet of \( n \) symbols.
    \item[4.3] A marked point process, such as a Hawkes process, a piecewise-constant conditional intensity model, or a neural point process, where event types and temporal dependencies are modeled causally.
    \item[4.4] An \( n \)-variate Gaussian distribution over \( \mathbb{R}^n \), where the covariance matrix captures the dependencies among continuous variables.
\end{itemize}
\end{example}

\subsection{Causal Concept}

In the RCT example, although the joint probability distribution may be arbitrarily complex, the specific causal question focuses on whether a direct causal edge exists from the exposure $Z$ to the outcome $Y$. Many causal questions share this property: they do not require recovering the full distribution over the instance space $\cI$, but instead aim to identify a specific structural feature or causal relationship within the model.
This contrasts with supervised machine learning and PAC learning, where the goal is to recover the entire label-generating mechanism. 
To formalize this notion, we introduce the concept of a causal concept, which refers to a specific property or feature of a causal model—such as the presence or absence of a causal edge—that we wish to identify.

In the RCT setting, randomized treatment assignment enables causal inference by allowing us to distinguish between a single pair of competing causal models $\langle\cM_1$, $\cM_2\rangle$. These models are identical in all respects except for the presence or absence of a direct causal edge from $Z$ to $Y$, typically accompanied by a corresponding change in the associated parameter.
This binary comparison can be extended to observational studies by considering a family of model pairs, where each pair differs only in the existence of the edge from $Z$ to $Y$, while agreeing on all other aspects. These shared aspects may be arbitrarily complex and represented by any valid joint distribution over the instance space.
The goal of a causal learning algorithm is to consistently identify the true causal model from competing pairs, even in the presence of challenges such as unmeasured confounding, model misspecification, and distributional variation.

\begin{definition}[Causal Concept]
    Let $\cI$ be an instance space.
    A causal concept $c$ is defined as any property that a causal model over $\cI$ may or may not possess. It is formally represented by a family of model pairs
    \[\cF_c =\{\langle \cM_{j1}, \cM_{j2}\rangle\}_{j\geq 1},\]
    where for each pair, the model $\cM_{j1}$ possesses the property corresponding to concept $c$, and $\cM_{j2}$ does not. All other aspects of the models in each pair are held fixed or assumed comparable.
\end{definition}

\begin{example}[Illustrative Causal Concepts]
Examples of causal concepts corresponding to previously discussed causal models include:

\begin{itemize}

    \item[6.1] Directed influence in a Bayesian network: Variable $A$ influences variable $B$; i.e., pairs of Bayesian networks in which the first model includes a directed edge $A$ to $B$, while the second excludes it. If desired, one may restrict to pairs where the causal effect of $A$ on $B$ exceeds a minimum threshold $\delta$, meaning the probability of $B$ changes by at least $\delta$ when the value of $A$ changes. If parts of the structure are known, only models conforming to that structural constraint are included in the family.
    \item[6.2] Sequential dependency in probabilistic grammars: An occurrence of symbol $b$ always immediately follows an occurrence of symbol $a$ in any sequence, e.g., $a$ might represent an ischemic stroke and $b$ the administration of tPA. We consider pairs of probabilistic prefix grammars \citep{prefixgrammars}, where in the first model of each pair, every occurrence of $b$ follows an $a$ on the right-hand side of all production rules, and no production rule deletes any $b$ that immediately follows an $a$. In contrast, the second model in each pair includes at least one violation of this constraint.
    \item[6.3] Excitation in temporal point processes: The occurrence of one event type (e.g., ischemic stroke) increases the rate or likelihood of another event type (e.g., administration of tPA). Pairs of marked Hawkes processes are constructed such that the first model encodes positive influence (shortened time-to-event), while the second encodes no such effect.
    \item[6.4] Marginal independence in multivariate Gaussians: Continuous variables $A$ and $B$ are dependent vs. independent. The first model in each pair is a multivariate Gaussian distribution in which $A$ and $B$ have non-zero covariance; in the second model, their covariance is zero.
\end{itemize}
\end{example}

In the PAC learning framework, the effectiveness of a learning algorithm $\mathcal{L}$ is influenced by the complexity of the concept class, typically quantified using the VC dimension \citep{vapnik2015uniform}. In the causal setting, however, additional complexity arises from two key sources: (i) the inherent complexity of individual causal models, and (ii) the degree of similarity between competing causal models.

Regarding the first, PAC learning commonly assumes a specific representation language for target concepts, such as decision trees, Boolean formulae, or deterministic finite automata (DFAs), with an associated notion of representation size (e.g., the number of symbols or bits).
\footnote{Different representations can affect the difficulty of the learning problem, even when describing the same concept. For example, DFAs may require exponentially more states than equivalent NFAs, decision trees and Boolean formulae can vary in compactness, and bit-level encodings often yield longer descriptions than symbolic ones.}
By analogy, a natural measure of the size $|\cM|$ of a causal model $\cM$ could be the total number of bits required to represent the DAG structure and its associated parameters (e.g., CPTs in a Bayesian network). 
Alternative complexity measures, such as the number of edges or parameters, provide intuitive and computationally convenient proxies for model size, especially in sparse or low-dimensional settings. However, these metrics may not fully capture the descriptive or inferential complexity of a causal model. Their use should therefore be context-sensitive, as they can introduce additional challenges or oversimplify model comparisons.

As for the second aspect, we constrain the family of causal model pairs so that the difference between models in each pair is at least $\delta$. In the simplest case, $\cF_{\delta,c}$ contains just one pair $\langle\cM_1, \cM_2\rangle$, for example, a Bayesian network $\cB$ generating the data and an alternative network $\cB'$. In this case, the concept $c$ is simply $\cB$ itself, and $\delta$ is a numerical summary of how different $\cB'$ is from $\cB$. This difference might be measured using Kullback–Leibler (KL) divergence \citep{cover1999elements} between the distributions, or, if $\cB$ and $\cB'$ share the same structure, it might be the maximum difference between corresponding parameters such as conditional probabilities.
More generally, $\delta$ can be expressed as a function over the parameter spaces $\Theta$ and $\Theta'$ of models $\cB$ and $\cB'$respectively, i.e., $\delta = f(\Theta, \Theta')$. Our approach is to set a threshold between $0$ and $\delta$ and to test a sample drawn from data against that threshold.

We extend this further by allowing a family $\cF_{\delta,c}$ of model pairs instead of just a single pair, in order to account for uncertainty about the true models. For example, $\cB$ could be any Bayesian network that includes an edge from variable $X$ to $Y$, without placing constraints on the rest of the network, thus permitting the presence of additional confounding components. We require that the learning algorithm succeeds for every possible pair in the family, and $\delta$ is well-defined for all such pairs. We thus update the notation of the causal family to include this threshold parameter, writing it as $\cF_{\delta,c}$.

\subsection{PACC Discovery}

We are now ready to present the formal definition of PACC Discovery in Definition~\ref{def:pacc}, followed by a corresponding general algorithm in Algorithm~\ref{alg:PACC_discovery}.

\begin{definition}
\label{def:pacc}
    For any $0 < \delta$ and $0 < \epsilon < 1$, let $c$ be a target causal concept, and let $\cF_{\delta, c} = \{\langle \cM_{j1}, \cM_{j2} \rangle\}_{j \geq 1}$ denote the corresponding causal family over the instance space $\cI$.

    For a specific pair $\langle \cM_{j1}, \cM_{j2} \rangle \in \cF_{\delta,c}$, the learner $\cL$ is given access to a sample $\cS$ generated from either $\cM_{j1}$ or $\cM_{j2}$, and is tasked with identifying which model generated the data. We say that the learner $\cL$ PACC-discovers the causal concept $c$ if, for \underline{any} such pair, it correctly identifies the data-generating model with probability at least $1-\epsilon$, using a sample of size $|\cS|$ that is polynomial in $\frac{1}{\epsilon}$ and $\frac{1}{\delta}$.
\end{definition}

\begin{algorithm}[h]
\caption{PACC Discovery General Framework}
\label{alg:PACC_discovery}
\begin{algorithmic}
\Require Parameters: $\epsilon \in (0,1)$, $\delta >0$; a causal concept $c$; and the corresponding causal family $\cF_{\delta, c} = \{\langle \cM_{j1}, \cM_{j2} \rangle\}_{j \geq 1}$.

\State \textbf{Input:} \underline{Any} pair $\langle \cM_{j1}, \cM_{j2} \rangle \in \cF_{\delta,c}$, and a sample $\cS$ generated from either \( \cM_{j1} \) or \( \cM_{j2} \), where the sample size \( |\cS| \) is polynomial in $\frac{1}{\epsilon}$ and $\frac{1}{\delta}$.
\State \textbf{Evaluation:} The learner $\cL$ evaluates the consistency of each model in the pair with the observed sample $\cS$.
\State \textbf{Output:} The learner $\cL$ outputs the model (either $\cM_{j1}$ or $\cM_{j2}$) that it predicts to be the data-generating model.
\State \textbf{Guarantee:} With probability at least $1 - \epsilon$, the output model is the true data-generating process.
\end{algorithmic}
\end{algorithm}

One may observe a similarity between classical hypothesis testing and the PACC Discovery framework, as both rely on statistical tests to distinguish between a null and an alternative hypothesis (or model). If the causal family contains only a single pair of models, the problem reduces to a standard hypothesis testing task: determining whether the null hypothesis (represented by the second model) can be rejected in favor of the alternative (the first model).
However, in the more general case where the causal family contains multiple pairs of models, the problem becomes fundamentally different. In this setting, one can conceptualize an adversary that selects which specific model pair the learner must evaluate.
The learner is required to succeed regardless of which pair is chosen, effectively performing a “worst-case” analysis. This setup emphasizes the robustness of the algorithm with respect to the target causal concept.
In contrast, traditional multiple hypothesis testing often focuses on identifying a single instance where one hypothesis is more easily distinguished from the others (i.e., an “easiest case” analysis). 
PACC Discovery, on the other hand, demands uniform performance across the entire causal family, ensuring reliability even under the hardest-to-distinguish scenarios.

\section{Applying PACC Discovery to Self-Controlled Case Series}
\label{sccs}

In this section, we demonstrate the efficacy of the PACC Discovery framework in establishing resource-bounded causal discovery guarantees for the SCCS method \citep{whitaker2006tutorial}. The SCCS approach is widely used to identify potential risk factors associated with medical products and has shown strong performance in detecting adverse effects from observational healthcare data \citep{overhage2012validation, schuemie2013replication}. However, it is not traditionally considered a causal discovery method. To our knowledge, this is the first work to provide formal causal discovery guarantees for a practical variant of SCCS commonly employed in real-world applications.

\subsection{SCCS Preliminaries}

SCCS was introduced in the 1990s as an innovative method for assessing vaccine safety. It was first used to identify an association between the Measles, Mumps, and Rubella (MMR) vaccine and idiopathic thrombocytopenic purpura (ITP) \citep{farrington1995new, farrington1995relative}. Since its introduction, SCCS has been widely applied in observational studies, including investigations of the association between antipsychotic drug use and stroke \citep{douglas2008exposure}, as well as the risks of acute myocardial infarction and COVID-19 infection \citep{katsoularis2021risk}.
In 2012, the Observational Medical Outcomes Partnership (OMOP) \citep{overhage2012validation} conducted an empirical evaluation of various statistical methods for adverse drug effect (ADE) detection, finding that SCCS outperformed many alternatives in terms of both predictive accuracy and bias. Subsequent studies have confirmed these findings \citep{schuemie2013replication, ryan2013comparison}.

The strong empirical performance of SCCS can be intuitively attributed to its self-controlled design, which treats each patient as their own control. This structure inherently adjusts for all time-invariant confounders, including unobserved ones. Below, we introduce the SCCS method and outline the key assumptions on which it relies.

\subsubsection*{SCCS Likelihood}

Consider a cohort of patients indexed by $i = 1, 2, \dots, P$, each of whom experiences at least one outcome event. The observation period for each patient is divided into shorter time intervals according to treatment status, indexed by $k$, and time-varying covariates such as age, indexed by $r$.  We assume that treatment durations fully cover the period during which the treatment may affect the outcome. For example, in the vaccination study \citep{miller1993risk}, the exposure is considered effective for 21 days following vaccine administration.

Let $\phi_i$ denote the baseline event risk for patient $i$, and let $\alpha_{ir}$ represent the effect of the time-varying covariate group $r$. 
We use $\beta$ to denote the logarithm of the relative incidence rate, capturing  the effect of treatment on the outcome event.
Define $\beta_{ik}$ as the treatment effect during period $k$, where $\beta_{ik} = \beta$ if patient $i$ is exposed during time period $k$ and $\beta_{ik} = 0$ if unexposed. 
The number of events $n_{ikr}$ observed for individual $i$ during treatment status interval $k$ and time-varying covariate group $r$ is assumed to follow a Poisson distribution with rate $\lambda_{ikr}$ defined as: 
\[\lambda_{ikr} = \mbox{exp}(\phi_i + \beta_{ik}+\alpha_{ir}), \quad n_{ikr} \sim \mbox{Poisson}(\lambda_{ikr} \tau_{ikr})\]
where $\tau_{ikr}$ denotes the length of time interval indexed by $k$ and $r$ for patient $i$. The goal of the SCCS method is to estimate the treatment effect parameter $\beta$ by maximizing the conditional log-likelihood:
\begin{equation}
\label{eq:sccs_full}
     l(\alpha,\beta) = \sum_{ikr}n_{ikr}\log \left( \frac{\tau_{ikr} \mbox{exp}(\phi_i + \beta_{ik}+\alpha_{ir})}{\sum_{pq} \tau_{ipq} \mbox{exp}(\phi_i+ \beta_{ip}+\alpha_{iq})} \right)
\end{equation}

\begin{figure}[h]
    \centering
    
\begin{center}  
\begin{tikzpicture}[scale=0.8]

\def\xStart{0}
\def\xEnd{12}
\def\dayMin{1}
\def\dayMax{250}
\newcommand{\dayToX}[1]{(#1 / \dayMax * \xEnd)}

\def\yOne{0}
\draw[very thick] (\xStart,\yOne) -- (\xEnd,\yOne);
\node[left] at (\xStart-0.8,\yOne) {Case 1};

\foreach \d in {1, 120, 141, 250} {
    \pgfmathsetmacro\x{\dayToX{\d}}
    \draw[very thick] (\x,\yOne+0.15) -- (\x,\yOne-0.15);
    \node[font=\small] at (\x,\yOne-0.4) {\d};
}

\foreach \d/\label in {96/95} {
    \pgfmathsetmacro\x{\dayToX{\d}}
    \draw[red, thick, ->] (\x,\yOne -0.6) -- (\x,\yOne-0.3);
    \node[red, font=\small] at (\x,\yOne+0.3) {\label};
    \node[red, font=\small] at (\x,\yOne-0.9) {Event};
}

\pgfmathsetmacro\xVacOne{\dayToX{119}}
\draw[black, thick, ->] (\xVacOne,\yOne+0.6) -- (\xVacOne,\yOne+0.2);
\node[font=\small] at (\xVacOne,\yOne+0.8) {vaccination};

\pgfmathsetmacro\xaa{(120 / 250) * 12}
\pgfmathsetmacro\xab{(141 / 250) * 12}
\fill[gray!40,opacity=0.5] (\xaa,\yOne-0.15) rectangle (\xab,\yOne+0.15);

\def\yTwo{-2.5}
\draw[very thick] (\xStart,\yTwo) -- (\xEnd,\yTwo);
\node[left] at (\xStart-0.8,\yTwo) {Case 2};

\foreach \d in {1, 150, 171,250} {
    \pgfmathsetmacro\x{\dayToX{\d}}
    \draw[very thick] (\x,\yTwo+0.15) -- (\x,\yTwo-0.15);
    \node[font=\small] at (\x,\yTwo-0.4) {\d};
}

\foreach \d/\label in {160/160} {
    \pgfmathsetmacro\x{\dayToX{\d}}
    \draw[red, thick, ->] (\x,\yTwo-0.6) -- (\x,\yTwo+-0.3);
    \node[red, font=\small] at (\x,\yTwo+0.3) {\label};
    \node[red, font=\small] at (\x,\yTwo-0.9) {Event};
}

\pgfmathsetmacro\xVacTwo{\dayToX{149}}
\draw[black, thick, ->] (\xVacTwo,\yTwo+0.6) -- (\xVacTwo,\yTwo+0.2);
\node[font=\small] at (\xVacTwo,\yTwo+0.8) {vaccination};

\pgfmathsetmacro\xba{\dayToX{150}}
\pgfmathsetmacro\xbb{\dayToX{171}}
\fill[gray!40,opacity=0.5] (\xba,\yTwo-0.15) rectangle (\xbb,\yTwo+0.15);

\end{tikzpicture}
\end{center}

    \caption{Timelines for two cases illustrating event occurrences and vaccination periods.
Red arrows indicate event times, black arrows indicate vaccination dates, and gray shaded boxes represent the post-vaccination risk periods.}
    \label{fig:sccs_exp}
\end{figure}

A simple example is illustrated in Figure~\ref{fig:sccs_exp}. Suppose two individuals are each observed over a 250-day period, with no time-varying covariates involved. Each individual receives a single vaccination during this time, and the exposure effect is assumed to last for 21 days post-vaccination. Then the log-likelihood for these two cases is:

\[l = \log\left(\frac{120e^{\phi_1}}{229e^{\phi_1}+21e^{\phi_1+\beta}}\right)^1 + \log\left( \frac{21e^{\phi_1+\beta}}{229e^{\phi_1}+21e^{\phi_1+\beta}}\right)^1\]

\subsubsection*{Assumptions}

In the potential outcomes framework for causal inference, ignorability (or the “no unmeasured confounders assumption”, NUCA) plays a central role. It states that, conditional on observed covariates $X$, the treatment assignment $Z$ is independent of the potential outcome events $Y$.
For longitudinal data, this assumption extends to sequential ignorability, which requires that at each time point $t$, the time-varying exposure $Z_t$ is conditionally independent of the potential outcome $Y_t$, given the history of observed covariates $X_t$ and past exposures. 
The SCCS method, however, offers greater flexibility by allowing for unobserved time-invariant confounders. Instead, it relies on the following weaker condition:

\begin{assumption}[No Unmeasured Time-varying Confounders (NUTVC)]
\label{assmp:nutvc}

Let $Z_t$ denote the exposure at time $t$ and let $\bar{Z}_t$ represent the exposure history up to time $t$. Similarly, let $Y_t$ denote the outcome event at time $t$, with $\bar{Y}_t$ representing the event history up to time $t$. Define $X_t$ as the observed time-varying covariates with history $\bar{X}_t$. 

The NUTVC assumption requires that:
\[Z_t \perp Y_t | \bar{Z}_{t-1}, \bar{Y}_{t-1}, \bar{X_t}, \quad \text{for all } t\]
\end{assumption}

Another common assumption in causal inference is the Stable Unit Treatment Value Assumption (SUTVA) assumption:

\begin{assumption}[SUTVA]
\label{assmp:sutva}
    For all units $i$ and time points $t$, the observed outcome $Y_{it}$ depends only on the treatment received by unit $i$,and not on the treatment assigned to any other units $j \neq i$: 
    \[Y_{it}(Z_i) = Y_{it}(Z_i,Z_{-i})\]
    where $Z_i$ is the treatment assigned to unit $i$, and $Z_{-i}$ is the vector of treatments assigned to all other units.
\end{assumption}

The following assumption ensures the validity of within-individual comparisons in SCCS. 
If an outcome event delays or prevents subsequent exposures, it can introduce systematic bias into the estimation of the relative incidence, as observed changes may reflect altered exposure patterns rather than true causal effects. For example, this assumption is violated when the outcome is death, which precludes any future exposures. In such cases, corrective measures such as incorporating pre-exposure risk periods or applying methodological extensions, are necessary to mitigate bias and preserve the validity of SCCS estimates \citep{whitaker2006tutorial, lee2024estimation}.

\begin{assumption} 
\label{assmp: evt_exp_inde}
Following \citep{petersen2016self}, we adopt the following key assumptions for the SCCS design:\\
    (1) The occurrence of an event does not influence the probability of subsequent exposures: \[\PP(Z_t|\bar{Y}_{t-1}) = \PP(Z_t)\]
    (2) Event rates are constant within each defined time interval: \[\PP(Y_t) = \PP(Y_{t'}), \quad \text{for all $t,t'$ within the same risk period.}\]
    (3) Events are either independently recurrent or sufficiently rare: \[\PP(Y_t,Y_{t'}) = \PP(Y_t)\PP(Y_{t'}) \quad \text{for $t \neq t'$}, \quad \text{  or  } \quad \PP(Y_t) \ll 1.\]

\end{assumption}

The last assumption aligns with the characteristics of EHRs data and vaccination studies, where the SCCS method is most commonly applied.
Given that events are sufficiently rare or independently recurrent, the likelihood of multiple occurrences of the same outcome within a single day is negligible. Moreover, repeated events on the same day are typically interpreted as consequences of an initial occurrence or as recurrences. Therefore, it is reasonable to assume that the number of daily events follows a binomial distribution, which approximates a Poisson distribution as the observation period increases and the event incidence remains low.

\begin{assumption}[Single Event per Day]
\label{assmp: bino_sccs}
    Only the first occurrence of a specific outcome type within a single day is recorded. 
\end{assumption}

Under Assumptions~\ref{assmp:nutvc}-~\ref{assmp: bino_sccs}, the SCCS method provides consistent estimates of the relative incidence associated with exposure \citep{farrington1995relative, whitaker2006tutorial}. 
A significant deviation of the estimated parameter $\beta$ from $0$, equivalently, of $\exp(\beta)$ from $1$, indicates a causal effect of the treatment on event incidence.

\subsection{SCCS Under the PACC Discovery Framework}

We define the causal concept $c$ for SCCS as whether exposure $Z$ influences the incidence of experiencing the event $Y$. Specifically, we require the change in event probability to be sufficiently large such that $\frac{\PP(Y|Z)}{\PP(Y|\neg Z)} \geq \delta$, for some $\delta > 0$, where $\neg Z$ denotes the absence of exposure.
We refer to this condition as “$Y$ is $\delta$-dependent on $Z$.” 
The corresponding causal family is defined as $\cF_{\delta,c} = \{\langle \cM_{j1}, \cM_{j2} \rangle\}$.
In $\cM_{j1}$, $Y$ follows a Poisson distribution with intensity that depends only on a time-invariant, patient-specific latent variable $\phi_i$. In $\cM_{j2}$, the intensity of $Y$ is a function of both $\phi_i$ and the exposure status $Z$, with the effect of $Z$ captured by the coefficient $\beta$. 
Additionally, we require that the probability of exposure $Z$ is sufficiently large to ensure a minimal fraction of exposure time across the population, without dependence on $Y$.

\begin{theorem}
\label{th:sccs}

Under Assumptions~\ref{assmp:nutvc}-\ref{assmp: bino_sccs}, for any $0 < \delta$ and $0 < \epsilon < 1$, we define the target causal concept $c$ as whether the event $Y$ is $\delta$-dependent on the exposure $Z$ within the instance space $\cI$. The corresponding causal family is given by $\cF_{\delta, c} = \{\langle \cM_{j1}, \cM_{j2} \rangle\}_{j \geq 1}$ as defined above. 

Then, given $\cO\left(\frac{1}{\log^2(\delta)}\log\left(\frac{1}{\epsilon}\right)\right)$ test examples, the SCCS algorithm can correctly distinguish between $\cM_{j1}$ and $\cM_{j2}$ with probability at least $1-\epsilon$. Therefore, $c$ is PACC discoverable by the SCCS method.  
\end{theorem}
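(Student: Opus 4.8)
The plan is to exploit the self-controlled, conditional structure of SCCS to reduce the discrimination task to a concentration inequality for Bernoulli trials. First I would condition each case on its realized event count and its exposure windows. Under Assumptions~\ref{assmp:nutvc}--\ref{assmp: bino_sccs} --- with NUTVC supplying the needed conditional independence, the constant-rate and rare/independently-recurrent conditions justifying the Poisson/conditional-multinomial form, and the single-event-per-day condition collapsing counts to indicators --- the conditional likelihood in Equation~\eqref{eq:sccs_full} factorizes so that the location of each recorded event (inside the exposed risk period versus the control period) is an independent Bernoulli trial. For a case $i$ with exposed-time fraction $R_i$, the success probability is $R_i$ under $\cM_{j1}$ (where $\beta=0$) and $p_{1,i}=\frac{R_i\delta}{R_i\delta+(1-R_i)}$ under $\cM_{j2}$ (where $\delta=\exp(\beta)$ is the relative incidence). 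Distinguishing $\cM_{j1}$ from $\cM_{j2}$ thus becomes the problem of separating two sequences of independent Bernoulli trials whose means differ.

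Next I would take as test statistic the fraction $\hat p = S/m$ of the $m$ recorded events that fall in exposed periods, which is exactly the SCCS score statistic evaluated at $\beta=0$. The key quantity is the per-case mean gap
\[p_{1,i}-R_i=\frac{R_i(1-R_i)(\delta-1)}{1+R_i(\delta-1)}.\]
I would lower-bound this gap by $\Omega(\log\delta)$ using two ingredients: the assumption guaranteeing a minimal fraction of exposure time, which bounds every $R_i$ away from $0$ and $1$, and the elementary inequality $\delta-1\ge\log\delta$, which is tight as $\delta\to1$. Together these yield a uniform lower bound $p_{1,i}-R_i\ge c\,\log\delta$, with a constant $c$ depending only on the exposure-fraction bounds.

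Finally, since $\hat p$ is an average of independent bounded variables, I would apply Hoeffding's inequality, place the decision threshold midway between the two population means, and bound the type-I and type-II error each by $\epsilon/2$. Solving $2\exp(-\tfrac12 m\,(c\log\delta)^2)\le\epsilon$ for $m$ gives $m=\cO\!\left(\frac{1}{\log^2\delta}\log\frac{1}{\epsilon}\right)$, matching the claimed bound; the $\log^2\delta$ arises because the effect size on the natural-parameter scale is $\beta=\log\delta$, and separating two distributions a distance $\Delta\beta$ apart costs $\Theta(1/\Delta\beta^2)$ samples. Crucially, because the gap bound $c\log\delta$ holds uniformly over every pair in $\cF_{\delta,c}$, the same $m$ succeeds against the adversarial (worst-case) choice of pair, which is precisely the PACC requirement; hence $c$ is PACC discoverable by SCCS.

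The main obstacle will be the reduction step itself: rigorously collapsing the full multi-interval, time-varying-covariate conditional likelihood into clean independent Bernoulli trials requires invoking each assumption at the right place and verifying that conditioning does not introduce dependence across cases. A secondary difficulty is making the gap bound genuinely uniform over a heterogeneous family with differing exposure fractions, and clarifying the regime of $\delta$ in which $1/\log^2\delta$ is the correct scaling: near the null the bound is tight, whereas for very strong effects the gap saturates at $1-R_i$ and the stated rate becomes conservative. Handling this gracefully --- for example by focusing the analysis on the hard, small-effect regime that actually drives the sample complexity --- is where most of the care is needed.
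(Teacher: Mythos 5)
Your proposal is sound and reaches the paper's bound, but by a genuinely different route. The paper works with the estimator itself: under a uniform-exposure-duration simplification it derives the closed-form SCCS MLE $\hat{\beta} = \log S_1 - \log S_2$, where $S_1,S_2$ are patient-averaged event rates in exposed and unexposed time, applies Hoeffding's inequality \emph{over patients} to $S_1$ and $S_2$ separately, and then transfers the concentration to the log scale via the Lipschitz bound $|\log S_l - \log \EE[S_l]| \le \tfrac{1}{P\lambda}|S_l - \EE[S_l]|$, which forces it to posit an ad hoc lower bound $P\lambda$ on all four quantities $S_1,S_2,\EE[S_1],\EE[S_2]$; thresholding $\hat{\beta}$ at $\log(\delta)/2$ then yields $P \ge \tfrac{8}{\lambda^2\log^2(\delta)}\log(4/\epsilon)$. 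You instead condition on event counts, reduce to the conditional-multinomial (per-event Bernoulli) form of SCCS, take the exposed-event fraction $\hat{p}$ (the score statistic at $\beta=0$) as the test statistic, and apply Hoeffding \emph{over events} to bounded indicators, with the mean gap controlled by $\delta-1\ge\log\delta$ and exposure fractions bounded away from $0$ and $1$. What your route buys: it avoids the log transform and the awkward $\lambda$ assumption entirely (note the paper's assumption is not innocuous --- the realized $S_1$ or $S_2$ can be zero with positive probability, so bounding them below "surely" is doing hidden work), and it is closer to the actual conditional likelihood structure of SCCS; the cost is that you must make the multinomial factorization rigorous, and your sample size counts events rather than cases, which are equivalent only up to the rarity constants (the paper's $\lambda$ plays exactly that conversion role). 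Your closing caveat about the large-$\delta$ regime --- the gap saturates at $1-R_i$ while $\log\delta$ grows, so the "uniform" bound $p_{1,i}-R_i \ge c\log\delta$ fails there --- is a real issue, but it afflicts the paper's theorem and proof equally (its bound also shrinks to zero as $\delta\to\infty$, which cannot be right since a constant number of cases never suffices); restricting the analysis to the hard near-null regime, as you propose, is the correct resolution for both arguments. One trivial point: you label $\cM_{j1}$ as the null model, which matches the prose of Section 4.2 but is the reverse of Algorithm 2 and Definition 5; the paper itself is inconsistent here, so just fix a convention.
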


\begin{algorithm}[h]
\caption{PACC Discoverability with SCCS}
\label{alg:SCCS}
\begin{algorithmic}
\Require Parameters: $\epsilon \in (0,1)$, $\delta > 0$. Target causal concept $c$: whether event \( Y \) is \( \delta \)-dependent on exposure \( Z \). Causal family \( \mathcal{F}_{\delta, c} = \{\langle \mathcal{M}_{j1}, \mathcal{M}_{j2} \rangle\}_{j \geq 1} \), where:
\begin{itemize}
    \item Under \( \mathcal{M}_{j1} \): \( Y \) is \( \delta \)-dependent on \( Z \).
    \item Under \( \mathcal{M}_{j2} \): \( Y \) is independent of \( Z \).
\end{itemize}

\State \textbf{Input:} \(\epsilon, \delta\), any model pair $\langle \cM_{j1}, \cM_{j2} \rangle \in \cF_{\delta,c}$, and a sample $S$ generated from one of the two models with size $\cO\left(\frac{1}{\log^2(\delta)}\log\left(\frac{1}{\epsilon}\right)\right)$.

\State \textbf{Test:}
Estimate the log-relative incidence \( \beta \) using SCCS from sample $S$ via Equation~\ref{eq:sccs_full}.  
Decide whether \( Y \) is independent of \( Z \) based on the following criterion:

\If{$\beta\geq \log(\delta)/2$}
    \State \textbf{output:} $\cM_{j1}$ is identified as the data-generating model.
\Else
    \State \textbf{output:} $\cM_{j2}$ is identified as the data-generating model.
\EndIf

\State \textbf{Guarantee:} The SCCS algorithm correctly distinguishes $\cM_{j1}$ from $\cM_{j2}$ with probability at least $1-\epsilon$.

\end{algorithmic}
\end{algorithm}

The proof of the theorem is provided in the Appendix~\ref{ap:sccs}. The algorithmic formulation, along with implementation details, is presented in Algorithm~\ref{alg:SCCS}. 
One potential extension involves addressing the current limitation that we excludes observed time-varying covariates. While such covariates can often be adjusted for using weighting or matching techniques, a more general SCCS framework may include them directly such as in Equation~\ref{eq:sccs_full}. In such cases, the likelihood becomes a function of both the drug effect $\beta$ and an additional nuisance parameter $\alpha$ representing the time-varying effect. The inclusion of these nuisance parameters can reduce the statistical efficiency of estimating $\beta$, thereby increasing the required sample size.

n addition, several notable extensions of the SCCS framework have been proposed. Simpson and Madigan \citep{msccs} introduced the Multiple SCCS (MSCCS) model, which enables the simultaneous analysis of multiple drugs rather than a single intervention. Kuang \citep{kuang} further extended SCCS to incorporate time-varying, patient-specific baseline risks, partially addressing the challenge of unmeasured time-varying confounders. In such cases, the definition of the causal concept and the design of competing causal models would require more deliberate and context-specific considerations. However, these directions go beyond the scope of this foundational work, and a detailed analysis of these extensions is left for future research.

\section{Analyzing Other Algorithms Using the PACC Discovery Framework}
\label{pacc-on-exist}

In this section, we demonstrate the alignment between the PACC Discovery framework and classical causal literature, including results from both graphical causal models and the potential outcomes framework. This alignment illustrates that PACC Discovery can express and extend existing results by translating traditional asymptotic inference into a PAC-style analysis, thereby shedding new light on the utility of these methods under finite-sample constraints.
Importantly, we emphasize that the assumptions commonly imposed to ensure the validity of classical methodologies often coincide with those required to derive favorable theoretical guarantees in the PACC framework.

\subsection{Propensity Score}
\label{propensity-score}

In observational studies, covariate imbalance and confounding variables pose significant challenges to causal inference. To mitigate these challenges, researchers often use propensity score methods \citep{rosenbaum1983central}, which estimate the probability of receiving a specific treatment, typically via logistic regression. Techniques such as matching, weighting, or stratification based on these scores aim to replicate the balance achieved in RCTs, thereby supporting valid causal conclusions. This section extends the PACC framework by integrating it with the foundational ideas of propensity scoring.

We introduce the following notation. The instance space $\cI$ consists of feature vectors defined over $n$ variables, denoted as $\{X, Y, Z\}$, where $Z \in \{0,1\}$ represents treatment assignment, $X$ includes all pre-treatment covariates, and $Y \in \{0,1\}$ is the outcome event. A specific configuration of the covariates is written as $X=x$, and similarly for $Y$ and $Z$. For each individual $i$, we define the potential outcomes $(Y_i(0), Y_i(1))$, corresponding to the outcomes under treatment conditions $Z=0$ and $Z=1$ respectively. 
The average treatment effect (ATE) is then estimated as:
\begin{equation}
\label{eq:ate}
    \text{ATE} = \EE_X[Y|Z=1] - \EE_X[Y|Z=0].
\end{equation} 

Feature vectors in the instance space $\cI$ are drawn from a joint probability distribution $\cD$, which can be factorized as a product of three components: $\cP$, $\cQ$, and $\cR$:
\[\cD = \cQ(X=x) \cP(Z=z|X=x) \cR(Y=y | Z=z, X=x).\]
In this setup, $\cQ$ models the distribution of patient covariates and is well-behaved \footnote{$\cQ$ should satisfy certain measurability conditions, including bounded support, absolute continuity, and Lipschitz continuity. More details can be found in \citep{van2000empirical}}.
$\cP$ captures how treatment assignment is determined based on covariates in observational studies, while $\cR$ models subsequent outcome events, conditional on treatment and covariates.
In a randomized experiment, the $\cQ$ still describes the target population, but the treatment assignment follows a Bernoulli distribution with equal probability (i.e. $\cB_{0.5}$). As a result, the treated sample follows the distribution $\cQ \cB_{0.5}$ rather than $\cQ\cP$ as in the observational settings. The primary goal of propensity scoring is to use observational data drawn from $\cQ\cP\cR$ to emulate the RCT distribution $\cQ\cB_{0.5}\cR$.

We impose the following standard assumptions, commonly used in propensity score-based analyses \citep{rosenbaum1983central,rosenbaum1984reducing,dehejia2002propensity}:

\begin{assumption}[Ignorability]
\label{assmp:nuca}
$(Y(0),Y(1))\perp Z| X$.
\end{assumption}

\begin{assumption}[Consistency]
\label{assmp:consis}
    $Y = Y(Z)$.
\end{assumption}

\begin{assumption}[Bounded Positivity]
\label{assmp:positi}
    For $0 < \delta_1 < 1$, $\delta_1 < \cP(Z=1|x) < 1-\delta_1$, $\forall x \in X$.
\end{assumption}

\begin{assumption}[Logistic Model for Treatment]
\label{assmp:logis}
    The treatment assignment $\cP$ can be represented by a logistic regression model over $X$.
\end{assumption}

We assume that the treatment has a non-negligible effect, denoted as $\delta_2 = \PP(Y|Z) - \PP(Y|\neg Z) \in (0,1)$. Let $\delta = \min\{\delta_1, \delta_2\}$, and characterize the causal relationship by assessing whether the outcome $Y$ is $\delta$-dependent on $Z$. Under these assumptions, we present Theorem~\ref{thm:ps} and the corresponding Algorithm~\ref{alg:PACC_PS}, along with an outline of the proof; full details are provided in Appendix~\ref{ps}.

\begin{theorem}
\label{thm:ps}
Under Assumptions~\ref{assmp:nuca}-\ref{assmp:logis}, let $\cI$ be the instance space consisting of $n$ variables $\{X,Y,Z\}$ as defined above. For any $0 < \delta < 1$ and $0 < \epsilon < 1$, define the target causal concept $c$ as whether $Y$ is $\delta$-dependent on $Z$, with the corresponding causal family given by $\cF_{\delta, c} = \{\langle \cM_{j1}, \cM_{j2} \rangle\}_{1\leq j}$. 

Let $\gamma= \min\{\epsilon,\delta, \frac{\delta^2}{4}\}$. Then given
$\cO \left( \frac{1}{\gamma^3} \log \frac{1}{\gamma})\right)$
test examples, the propensity score method can correctly distinguish between any pair $\langle \cM_{j1}, \cM_{j2} \rangle$ with probability at least $1-\epsilon$. Therefore, $c$ is PACC discoverable by propensity score method.

\end{theorem}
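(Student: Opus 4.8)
The plan is to analyze the inverse-propensity-weighted (IPW) estimator of the ATE and convert it into a decision rule with a margin of $\delta/2$. First I would fix the rule: fit a logistic propensity model $\hat e(x)=\sigma(\hat\theta^\top x)$ by maximum likelihood on the sample, clip it to $[\delta_1,1-\delta_1]$ to respect positivity, form the estimate
\[
\hat\tau \;=\; \frac{1}{m}\sum_{i=1}^m\left(\frac{Z_iY_i}{\hat e(X_i)}-\frac{(1-Z_i)Y_i}{1-\hat e(X_i)}\right),
\]
and declare $\cM_{j1}$ (that $Y$ is $\delta$-dependent on $Z$) iff $\hat\tau\ge\delta/2$. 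Under Assumptions~\ref{assmp:nuca}--\ref{assmp:consis}, the population version of this functional evaluated at the true propensity $e$ coincides with the ATE in (\ref{eq:ate}) by the standard Horvitz--Thompson identity; hence under $\cM_{j2}$ the estimand is $0$, while under $\cM_{j1}$ it equals $\delta_2\ge\delta$. The two hypotheses are thus separated by a margin of at least $\delta$, and a correct decision is guaranteed whenever $|\hat\tau-\mathrm{ATE}|<\delta/2$.

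Next I would decompose $|\hat\tau-\mathrm{ATE}|$ into a plug-in bias term $|\tilde\tau_{\hat e}-\mathrm{ATE}|$ and a sampling-fluctuation term $|\hat\tau-\tilde\tau_{\hat e}|$, where $\tilde\tau_{\hat e}$ is the population IPW functional at the estimated propensity $\hat e$. Bounded Positivity (Assumption~\ref{assmp:positi}) keeps every denominator above $\delta_1$ and every weight below $1/\delta_1$, so, using $Y\in\{0,1\}$ and $\tfrac{1}{\hat e}-\tfrac{1}{e}=\tfrac{e-\hat e}{\hat e\, e}$, the bias is controlled linearly by the propensity error, $|\tilde\tau_{\hat e}-\mathrm{ATE}|=\cO(\|\hat e-e\|_\infty/\delta_1^2)$. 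The summands of $\hat\tau$ are bounded by $1/\delta_1$, so a Bernstein/Hoeffding bound yields $|\hat\tau-\tilde\tau_{\hat e}|=\cO\!\bigl(\sqrt{\log(1/\epsilon)/(m\delta_1^2)}\bigr)$ with probability at least $1-\epsilon/2$.

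The crux is bounding the propensity error $\|\hat e-e\|_\infty$. Here the Logistic Model assumption (Assumption~\ref{assmp:logis}) makes the problem well-specified, and Bounded Positivity forces the logits into a compact range, which renders the conditional log-likelihood strongly concave with well-conditioned Fisher information; standard MLE concentration then gives $\|\hat\theta-\theta^*\|=\cO\!\bigl(\sqrt{\log(1/\epsilon)/m}\bigr)$, and since $\sigma$ is $1$-Lipschitz and the covariates are bounded this transfers to $\|\hat e-e\|_\infty=\cO\!\bigl(\sqrt{\log(1/\epsilon)/m}\bigr)$. Combining the three bounds, the total error is $\cO\!\bigl(\delta_1^{-2}\sqrt{\log(1/\epsilon)/m}\bigr)$, and forcing it below $\delta/2$ requires $m=\Omega\!\bigl(\log(1/\epsilon)/(\delta^2\delta_1^4)\bigr)$. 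Since $\delta_1\ge\delta$ and $\gamma=\min\{\epsilon,\delta,\delta^2/4\}$ with $\delta^2/4<\delta$, the binding term is $1/\delta^6=\Theta(\gamma^{-3})$ while $\log(1/\epsilon)\le\log(1/\gamma)$, which yields the claimed $\cO(\gamma^{-3}\log(1/\gamma))$ sample size; a union bound over the propensity-estimation and sampling failure events keeps the total error probability below $\epsilon$.

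The main obstacle I anticipate is the bias term rather than the Monte-Carlo fluctuation: the decision margin $\delta/2$ must dominate a bias of order $\|\hat e-e\|_\infty/\delta_1^2$, which forces the propensity to be learned to accuracy $\cO(\delta^3)$, and since logistic regression converges only at rate $1/\sqrt m$ this demands $\sim\delta^{-6}$ samples and produces the cubic $\gamma^{-3}$ rate. Making this rigorous requires the strong-concavity / Fisher-information argument to upgrade parameter accuracy to uniform propensity accuracy, together with careful handling of the dependence between $\hat e$ and the same data used in $\hat\tau$; this is cleanest under sample splitting, so that the Bernstein step conditions on an independent $\hat e$ and the clipped estimator remains unbiased for $\tilde\tau_{\hat e}$.
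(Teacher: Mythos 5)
Your proposal is correct in outline but takes a genuinely different route from the paper. The paper never forms an inverse-propensity-weighted estimator: it learns $\cP'$ as a probabilistic concept with an $L_1(\cQ)$ guarantee $\EE_{X\sim\cQ}|\cP-\cP'|\le\gamma$ (via Haussler's pseudodimension bounds and Theorem 5.1 of Kearns--Schapire), then uses rejection sampling with acceptance probability $\min\{\mathrm{median}(\cP'(X))/\cP'(X),1\}$ to manufacture an approximately randomized sample from $\cQ\,\cP/\cP'$, and finally treats the ATE comparison on that adjusted sample as a two-hypothesis agnostic PAC problem, transferring the propensity error through a dedicated approximate-rejection-sampling lemma. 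The cubic rate $\gamma^{-3}$ arises there because rejection sampling pays a $1/\delta$ acceptance-rate overhead on top of the $\gamma^{-2}$ post-acceptance samples; in your argument it arises instead because the IPW bias forces the propensity to be learned to accuracy $\cO(\delta\delta_1^2)$ at a $1/\sqrt{m}$ rate. Your route uses the estimator that is standard in the causal-inference literature, avoids the sample waste of rejection, and makes the origin of the cubic rate transparent; the paper's route reduces the final decision to a clean finite-hypothesis PAC problem and, importantly, only ever needs $L_1(\cQ)$ control of the propensity error.

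That last point is where your sketch has its one real weakness: the step $\|\hat e - e\|_\infty = \cO\bigl(\sqrt{\log(1/\epsilon)/m}\bigr)$ from ``standard MLE concentration'' requires a well-conditioned Fisher information and non-degenerate, bounded covariates, none of which is supplied by Assumptions~\ref{assmp:nuca}--\ref{assmp:logis}; with collinear covariates the parameter $\theta^*$ is not even identified, so parameter-level concentration cannot be the basis of the argument. The repair is cheap and stays inside your framework: the IPW bias needs only $L_1(\cQ)$ accuracy, since after clipping $\hat e\ge\delta_1$ one has $\bigl|\EE[ZY/\hat e(X)]-\EE[Y(1)]\bigr|\le\EE_{X\sim\cQ}|e(X)-\hat e(X)|/\delta_1$ (and similarly for the control arm), so you can replace the MLE/Fisher step with the same distribution-free pseudodimension bound the paper invokes and keep the rest of your decomposition, including the sample splitting you already propose, unchanged. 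With that substitution your requirement $m=\Omega\bigl(\log(1/\epsilon)/(\delta^2\delta_1^4)\bigr)$, combined with $\delta\le\delta_1$ and $\gamma=\min\{\epsilon,\delta,\delta^2/4\}$, indeed lands within the claimed $\cO\bigl(\gamma^{-3}\log(1/\gamma)\bigr)$ sample complexity.
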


\begin{algorithm}[h]
\caption{PACC Discoverability Using Propensity Score}
\label{alg:PACC_PS}
\begin{algorithmic}

\Require Parameters: \(\epsilon \in (0,1),\delta \in (0,1)\). Instance space \( \cI \) contains $n$ variables denote as $\{X,Y,Z\}$. Target causal concept $c$ and the corresponding causal family \( \mathcal{F}_{\delta, c} = \{\langle \mathcal{M}_{j1}, \mathcal{M}_{j2} \rangle\}_{1 \leq j} \), where:
\begin{itemize}
    \item Under \( \mathcal{M}_{j1} \), $Y$ is \( \delta \)-dependent on \( Z \).
    \item Under \( \mathcal{M}_{j2} \),
    \( Y \) is independent of \( Z \).
\end{itemize}

\State \textbf{Input:} 
A sample $S$ of size $\cO \left( \frac{1}{\gamma^3} \log \frac{1}{\gamma} \right)$ drawn from $\cM_{j1}$ or $\cM_{j2}$, where $\gamma= \min\{\epsilon,\delta, \frac{\delta^2}{4}\}$.

\State \textbf{Step 1: Propensity Model Estimation.}
Build a propensity model $\cP'$ that approximates $\cP$ using $ \frac{64}{\gamma^2} (2n\log (\frac{16e}{\gamma}) + \log(\frac{48}{\epsilon})) $ samples to predict treatment assignment. 

\State \textbf{Step 2: Rejection Sampling.} 
Obtain $\cO\left(\frac{1}{\gamma^2}\log(\frac{1}{\epsilon})\right)$ samples from $\cQ\frac{\cP}{\cP'}$ by rejection sampling. 

\State \textbf{Step 3: Final Decision.} Compute the ATE using Equation~\eqref{eq:ate} on the adjusted sample:
\If{ATE $\geq \delta/2$}
    \State \textbf{output:} $\cM_{j1}$ is identified as the data-generating model.
\Else
    \State \textbf{output:} $\cM_{j2}$ is identified as the data-generating model.
\EndIf

\State \textbf{Guarantee:} The algorithm correctly distinguishes between \( \mathcal{M}_{j1} \) and \( \mathcal{M}_{j2} \) with probability at least \( 1 - \epsilon \).

\end{algorithmic}
\end{algorithm}

Our goal is to approximate the randomized distribution $\cQ\cB_{0.5}\cR$ from the observational distribution $\cQ\cP\cR$. We start by developing a linear propensity model $\cP'$ to approximate $\cP$, using logistic regression on pre-treatment variables. The sample complexity required for this approximation is polynomial, as established by Kearns and Schapire in their foundational work on learning probabilistic concepts (p-concepts) \citep{kearns1994efficient}, and by Haussler's extension of VC dimension theory to p-concepts via pseudodimension \citep{haussler1992decision}. Because the pseudodimension of linear models is low, $\cP'$ can approximate $\cP$ with high accuracy and confidence.
This approximation enables rejection sampling based on the estimated propensity distribution $\cP'$, thereby generating a new sample from $\cQ \frac{\cP}{\cP'}$, which closely mimics the randomized treatment assignment of $\cQ\cB_{0.5}$.
To determine the total number of samples required, we use results from agnostic PAC learning \citep{kearns1992toward}, accounting for both the data needed to fit the propensity model and the rejection sampling process.
Finally, we determine that $Y$ is independent on $Z$ if the ATE is smaller than $\frac{\delta}{2}$. 
Future research may investigate whether alternative techniques, such as overlap weighting or propensity score matching, could enhance sample efficiency or expand applicability.

\subsection{Instrumental Variable}

This subsection explores another predominant method in observational studies for addressing confounders: instrumental variables (IVs). Introduced in the early 20th century \citep{Wri28}, IV methodology has fostered a dynamic and fruitful area of research, with significant applications in the fields of economics, statistics, and medicine \citep{SH84,card1993using,angrist2008mostly}. The intuition behind IVs is to identify an external variable, an instrument, that influences the treatment but has no direct effect on the outcome except through the treatment. This setup allows researchers to isolate the exogenous component of the treatment variation that causally impacts the outcome.

While the propensity score results rely on the ignorability assumption, this condition is often violated in practice due to the unmeasured confounders. To address this limitation, we explore how IVs provide a potential solution by relaxing Assumption~\ref{assmp:nuca} and replacing it with the following assumption:

\begin{assumption}[Perfect IV]
\label{assmp:iv}
    Let $D$ be a perfect IV, with the causal models follow the structure illustrated in Figure~\ref{fig:iv}. The variable $D$ satisfies the following three conditions: 
    \begin{itemize}
        \item (Relevance) $D$ is correlated with the treatment $Z$: $D \not\!\perp\!\!\!\perp Z | X$ ;
        \item (Independence) $D$ is conditionally independent of all unmeasured confounders $U$ that affect the outcome $Y$: $D \perp \!\!\! \perp U|X$;
        \item (Exclusion restriction) $D$ influences the outcome $Y$ only through its effect on the treatment $Z$: $Y(D,Z) = Y(Z)$.
    \end{itemize}
\end{assumption}

\begin{figure}[h]
\centering

\begin{subfigure}[b]{0.4\textwidth}
  \centering
  \begin{tikzpicture}[>=stealth, scale=0.9]
    \node[draw, circle] (D) at (0,0) {D};
    \node[draw, circle] (Z) at (2,0) {Z};
    \node[draw, circle] (Y) at (4,0) {Y};
    \node[draw, circle] (X) at (2.3,1.5) {X};
    \node[draw, dashed, circle, minimum size=0.3cm] (U) at (3,-1.2) {$U$};

    \draw[->] (D) -- (Z);
    \draw[->] (X) -- (Z);
    \draw[->] (Z) -- (Y);
    \draw[->] (U) -- (Z);
    \draw[->] (U) -- (Y);
    \draw[->] (X) -- (D);
    \draw[->] (X) -- (Y);
  \end{tikzpicture}
  \caption{$\mathcal{M}_1$}
  \label{fig:iv1}
\end{subfigure}
\hspace{0.3cm}
\begin{subfigure}[b]{0.4\textwidth}
  \centering
  \begin{tikzpicture}[>=stealth, scale=0.9]
    \node[draw, circle] (D) at (0,0) {D};
    \node[draw, circle] (Z) at (2,0) {Z};
    \node[draw, circle] (Y) at (4,0) {Y};
    \node[draw, circle] (X) at (2.3,1.5) {X};

  \node[draw, dashed, circle, minimum size = 0.3cm] (U) at (3,-1.2) {$U$};

  \draw[->] (D) -- (Z);
  \draw[->] (X) -- (Z);
  \draw[->] (U) -- (Z);
  \draw[->] (U) -- (Y);
  \draw[->] (X) -- (D);
  \draw[->] (X) -- (Y);
  \end{tikzpicture}
  \caption{$\mathcal{M}_2$}
  \label{fig:iv2}
\end{subfigure}

\caption{Bayesian network structures that define the relationships between the outcome $Y$, treatment $Z$, unmeasured confounders $U$, an instrumental variable $D$, and all other observed covariates $X$.}
\label{fig:iv}
\end{figure}
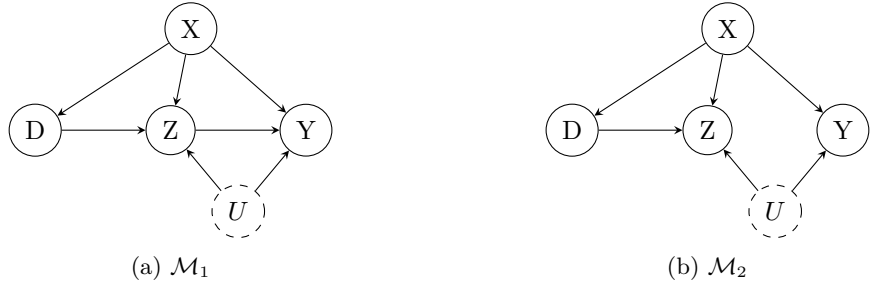

\begin{algorithm}[h]
\caption{PACC Discoverability Using 2SLS}
\label{alg:PACC_2SLS}
\begin{algorithmic}

\Require Parameters: \(\epsilon \in (0,1),\delta \in (0,1)\). Instance space \( \cI \) contains $n$ variables denoted as $\{Z, Y, X, U,D\}$. Target causal concept $c$ and the corresponding causal family \( \mathcal{F}_{\delta, c} = \{\langle \mathcal{M}_{j1}, \mathcal{M}_{j2} \rangle\}_{1 \leq j} \), where:
\begin{itemize}
    \item Under \( \mathcal{M}_{j1} \), $Y$ is \( \delta \)-dependent on \( Z \).
    \item Under \( \mathcal{M}_{j2} \),
    \( Y \) is independent of \( Z \).
\end{itemize}

\State \textbf{Input:} 
Sample $S$ of size $\cO \left( \frac{1}{\delta^2 \epsilon} \right)$, drawn from either $\cM_{j1}$ or $\cM_{j2}$.

\State \textbf{Stage I: Regress $Z$ on $D$}: 
$Z = \alpha D + \xi_1$, $\hat{\alpha} = \frac{\sum_i D_i Z_i}{\sum_{i}D_i^2}$, $\hat{Z} = \hat{\alpha}D$ 

\State \textbf{Stage II: Regress $Y$ on $\hat{Z}$}:
$Y = \beta \hat{Z} + \xi_2$, $\hat{\beta} = \frac{\sum_i D_i Y_i}{\sum_i D_i Z_i}$

\State \textbf{Final Decision}
\If{$\hat{\beta}\geq \delta/2$}
    \State \textbf{output:} $\cM_{j1}$ is identified as the data-generating model.
\Else
    \State \textbf{output:} $\cM_{j2}$ is identified as the data-generating model.
\EndIf

\State \textbf{Guarantee:} The 2SLS method correctly distinguishes between \( \mathcal{M}_{j1} \) and \( \mathcal{M}_{j2} \) with probability at least \( 1 - \epsilon \).

\end{algorithmic}
\end{algorithm}

Under Assumption~\ref{assmp:iv}, we apply d-separation to the full causal DAGs $\mathcal{M}_1$ and $\mathcal{M}_2$ (Figure~\ref{fig:iv}) to derive reduced models over the observed variables $(X, D, Y)$ (Figure~\ref{fig:iv'}).
In $\mathcal{M}_1$, the path $D \rightarrow Z \rightarrow Y$ induces a dependence between $D$ and $Y$, which remains active as long as $Z$ is not conditioned on. After marginalizing over the unobserved confounder $U$, this dependence is preserved and represented as an effective direct edge $D \rightarrow Y$ in the simplified graph $\mathcal{M}_1'$, indicating that $D \not\!\perp\!\!\!\perp Y \mid X$.
In contrast, in $\mathcal{M}_2$, once we condition on $X$, all paths between $D$ and $Y$ are blocked. This follows from two structural properties: (i) the exclusion restriction ensures there is no direct edge from $D$ to $Y$, and (ii) conditioning on $X$ blocks any back-door paths through the unobserved confounder $U$. After marginalizing over $U$, the resulting simplified graph $\mathcal{M}_2'$ contains no direct edge between $D$ and $Y$; instead, both share a common parent $X$, forming a v-structure $D \leftarrow X \rightarrow Y$. In this configuration, conditioning on $X$ d-separates $D$ and $Y$, implying $D \!\perp\!\!\!\perp Y \mid X$.

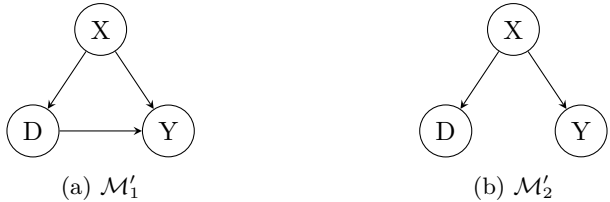
\begin{figure}[h]
\centering

\begin{subfigure}[b]{0.3\textwidth}
  \centering
  \begin{tikzpicture}[>=stealth, scale=0.9]
    \node[draw, circle] (D) at (0,0) {D};
    \node[draw, circle] (Y) at (2,0) {Y};
    \node[draw, circle] (X) at (1,1.5) {X};

    \draw[->] (D) -- (Y);
    \draw[->] (X) -- (D);
    \draw[->] (X) -- (Y);
  \end{tikzpicture}
  \caption{$\mathcal{M}_1'$}
  \label{fig:iv1'}
\end{subfigure}
\hspace{0.3cm}
\begin{subfigure}[b]{0.3\textwidth}
  \centering
  \begin{tikzpicture}[>=stealth, scale=0.9]
    \node[draw, circle] (D) at (2,0) {D};
    \node[draw, circle] (Y) at (4,0) {Y};
    \node[draw, circle] (X) at (3,1.5) {X};

    \draw[->] (X) -- (D);
    \draw[->] (X) -- (Y);
  \end{tikzpicture}
  \caption{$\mathcal{M}_2'$}
  \label{fig:iv2'}
\end{subfigure}

\caption{Simplified Bayesian network structures specify the relationships between the outcome $Y$, instrumental variable $D$, and all other measured variables $X$.}
\label{fig:iv'}
\end{figure}

With the simplified Bayesian network, we implement the just-identified case of the Two-Stage Least Squares (2SLS) algorithm and demonstrate its applicability within the PACC framework. The 2SLS method is widely used in econometrics and causal inference, particularly when randomized experiments are not feasible and unmeasured confounding threatens the validity of observational analyses.The just-identified 2SLS procedure proceeds in two stages:

 \textbf{Stage I:} Regress the endogenous variable $Z$ on the instrument $D$ to obtain predicted values, i.e.: $Z = \alpha D + \xi_1$;\\
\textbf{Stage II:} Regress the outcome $Y$ on the predicted values $\hat{Z}$ obtained from the first stage, i.e., $Y = \beta \hat{Z} + \xi_2$.

Here, $\xi_1$ and $\xi_2$ are error terms with mean zero and finite variance. Similar to the propensity score analysis, we define the target causal concept $c$ as whether the outcome $Y$ is $\delta$-dependent on the treatment $Z$, where $\delta = \min\{\delta_1, \delta_2\}$. Here, $\delta_1 \in (0,1)$ comes from the positivity assumption, and $\delta_2 \in (0,1)$ denotes the minimum difference between the probabilities of $Y$ occurring when $Z$ is true versus when $Z$ is false, specifically among compliers—individuals whose treatment status changes in response to the instrument. This quantity captures the minimal causal effect of the instrument on the outcome, mediated through the treatment, within the subpopulation affected by the instrument.
Under these assumptions, we present Theorem~\ref{thm:iv} and the corresponding Algorithm~\ref{alg:PACC_2SLS}.

\begin{theorem}
\label{thm:iv}
    Under Assumptions~\ref{assmp:consis}-\ref{assmp:positi}, let $\cI$ be the instance space consisting of the binary variables: treatment $Z$, outcome $Y$, measured covariates $X$, unmeasured confounders $U$, and a perfect IV $D$ that together satisfy Assumption~\ref{assmp:iv}.

    For any $0 < \delta < 1$ and $0 < \epsilon < 1$, define the target causal concept $c$ as whether $Y$ is $\delta$-dependent on $Z$, with the corresponding causal family given by $\cF_{\delta, c} = \{\langle \cM_{j1}, \cM_{j2} \rangle\}_{1\leq j}$. Then given a test sample of size $\cO\left( \frac{1}{\delta^2 \epsilon}\right)$, the 2SLS method can correctly distinguish between any pair $\langle \cM_{j1}, \cM_{j2} \rangle$ with probability at least $1-\epsilon$. Therefore, $c$ is PACC discoverable via 2SLS methods.
\end{theorem}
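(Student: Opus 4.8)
The plan is to split the argument into a population-level identification step and a finite-sample concentration step, and then combine them against the decision threshold $\delta/2$. First I would pin down the population value of the 2SLS estimand in each model. Using the reduced graphs $\cM_1'$ and $\cM_2'$ from Figure~\ref{fig:iv'} (obtained by d-separation and marginalizing over $U$), the just-identified IV estimand is the Wald ratio $\beta = \mathrm{Cov}(D,Y)/\mathrm{Cov}(D,Z)$, the population analogue of $\hat\beta = (\sum_i D_i Y_i)/(\sum_i D_i Z_i)$ after centering on $X$. Under $\cM_{j2}$ the reduced graph gives $D \perp Y \mid X$, which forces the numerator covariance to vanish and hence $\beta = 0$. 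Under $\cM_{j1}$ the exclusion restriction and independence conditions of Assumption~\ref{assmp:iv} let me invoke the standard LATE identity, so $\beta$ equals the average causal effect of $Z$ on $Y$ among compliers; by the $\delta$-dependence hypothesis this effect is at least $\delta_2 \geq \delta$. Thus the population estimand is $0$ under $\cM_{j2}$ and at least $\delta$ under $\cM_{j1}$, leaving a gap of size $\delta$ straddling the threshold $\delta/2$.

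Second, I would control the finite-sample error $|\hat\beta - \beta|$. Because $D, Y, Z$ are binary, the numerator $N_n = \frac1n\sum_i D_i Y_i$ and denominator $M_n = \frac1n\sum_i D_i Z_i$ are averages of $[0,1]$-valued i.i.d.\ terms with means $\bar N = \EE[DY]$, $\bar M = \EE[DZ]$ and variances of order $1/n$. Chebyshev's inequality then gives $\PP(|N_n - \bar N| > t) = \cO(1/(n t^2))$, and likewise for $M_n$. The relevance condition together with bounded positivity (Assumption~\ref{assmp:positi}) provides a uniform lower bound $\bar M \geq c_0 > 0$ over the whole family, keeping the denominator away from zero. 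I would then propagate these bounds to the ratio via the elementary perturbation inequality
\[
\left| \frac{N_n}{M_n} - \frac{\bar N}{\bar M} \right| \;\le\; \frac{|N_n - \bar N|}{|M_n|} + \frac{|\bar N|\,|M_n - \bar M|}{|M_n|\,|\bar M|}.
\]
Choosing the deviation level $t$ proportional to $\delta$ and forcing the total failure probability below $\epsilon$ yields, through the $\cO(1/(n t^2))$ Chebyshev rate, a sample size of order $1/(\delta^2\epsilon)$, matching the claimed $\cO(1/(\delta^2\epsilon))$; note that the $1/\epsilon$ (rather than $\log(1/\epsilon)$) factor is exactly what a second-moment bound produces.

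Combining the two steps, with $n = \cO(1/(\delta^2\epsilon))$ samples we have $|\hat\beta - \beta| < \delta/2$ with probability at least $1-\epsilon$. Under $\cM_{j1}$ this forces $\hat\beta > \delta - \delta/2 = \delta/2$, so the rule outputs $\cM_{j1}$; under $\cM_{j2}$ it forces $\hat\beta < \delta/2$, so the rule outputs $\cM_{j2}$. Because the lower bound $c_0$ and the LATE gap hold uniformly in $j$, the guarantee is uniform over the entire causal family, which is precisely what PACC discoverability demands.

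The main obstacle I anticipate is the ratio estimator's sensitivity to instrument strength, i.e.\ the classic weak-instrument issue. The factors $1/|\bar M|$ and $1/|\bar M|^2$ in the perturbation bound make a uniform lower bound $c_0$ on $\mathrm{Cov}(D,Z)$ indispensable; without it the variance of $\hat\beta$, and hence the required sample size, could blow up across the family. Making the relevance condition quantitative, deriving $c_0$ explicitly from positivity and the assumed compliance structure and verifying that it does not itself degrade with $\delta$, is the delicate part, and it is what ultimately governs whether the $\delta^{-2}$ scaling (rather than something worse) is attainable.
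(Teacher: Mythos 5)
Your proposal is correct and follows essentially the same route as the paper's proof: both arguments control the numerator $\sum_i D_i Y_i$ and the denominator $\sum_i D_i Z_i$ of the 2SLS ratio separately via Chebyshev's inequality (which is exactly why the bound carries a $1/\epsilon$ rather than $\log(1/\epsilon)$ factor), keep the denominator bounded away from zero using instrument relevance, compare $\hat{\beta}$ to the threshold $\delta/2$ in the two cases $\beta=0$ and $|\beta|\geq\delta$, and take a union bound, yielding $\cO\left(\frac{1}{\delta^2\epsilon}\right)$ samples. The only cosmetic differences are that the paper obtains the population gap directly from the linear structural equations with the normalization $\EE[D]=0$ (so $\EE[\hat{B_1}]=\beta\alpha\sigma_D^2$ and $\EE[\hat{B_2}]=\alpha\sigma_D^2$) rather than through the LATE identity, and that your explicit concern about a uniform lower bound on instrument strength across the family is a point the paper leaves implicit by absorbing $\alpha$, $\sigma_D^2$, $\sigma_{DY}^2$, and $\sigma_{DZ}^2$ into the big-$\cO$ as fixed constants.
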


To prove the theorem, we bound both the false positive and false negative errors. Specifically, when the true treatment effect 
$\beta = 0$ (i.e., no causal effect), the estimated coefficient $\hat{\beta}_{\text{2SLS}}$ should, with high probability, satisfy 
$|\hat{\beta}_{\text{2SLS}}| \leq \delta/2$. Conversely, when the true effect satisfies $|\beta| \geq \delta$, the estimator should exceed 
the decision threshold, i.e., $|\hat{\beta}_{\text{2SLS}}| > \delta/2$, with high probability. These guarantees are established by applying Chebyshev’s inequality under the assumption that the covariance between $Z$ and $D$, 
and the variance of $D$, are finite. The complete proof is provided in Appendix~\ref{2sls}.

\section{Conclusions and Future Work}
\label{conclusion}

The first contribution of this paper is to highlight the need for causal discovery models tailored to resource-limited settings, especially in observational studies. The proposed PACC Discovery framework offers new theoretical insights that inspire novel algorithms. It also improves our understanding of existing methods by quantifying their resource requirements.
The third contribution is to present some initial results within the PACC Discovery framework, demonstrating its practical value and potential. Specifically, we provide the first theoretical guarantee for the SCCS method in causal discovery. We also offer theoretical justifications for established methods such as propensity score and instrumental variables.

Future research can extend in several directions. One direction is to refine the current PACC Discovery framework. Although it is designed to be broadly applicable, this paper focuses on the causal concept of whether an outcome depends on a treatment. Future work can explore alternative concepts, such as causal direction, where competing models imply opposite directions of causality, and causal structure, where model pairs differ in structural assumptions.
Another open question concerns the number of causal model pairs. While the total number of pairs can be large, practical constraints and background knowledge often reduce the number of relevant comparisons. A promising direction for future work is to establish theoretical bounds that relate the number of model pairs to the required sample size.

Additionally, exploring variations of the PACC Discovery framework opens new avenues for research. One promising direction is a Bayesian variant of PACC Discovery. This approach would begin with a prior distribution over all plausible causal models and aim to either identify the correct model or converge to an accurate posterior distribution with high probability. Such an extension would parallel the role of U-Learnability \citep{ulearn:mi15}, which redefined PAC-learnability by introducing a probability distribution over target concepts in addition to the distribution over training examples. However, this direction poses significant computational challenges: computing exact posteriors over DAGs is known to be intractable, and approximate methods such as MCMC-based sampling often struggle to scale to large model spaces.

A further direction is to extend the PACC Discovery framework to a broader class of causal algorithms. For example, under appropriate assumptions, the framework can be applied to differential prediction tasks, which aim to identify models that perform better for one subgroup than another. For instance, predicting postpartum depression more accurately in younger women than in older women, or predicting preeclampsia more effectively for one racial group than another \citep{kuusisto2014support}. The PACC Discovery framework provides a formal, sample-efficient lens to interpret such disparities causally. In particular, it enables principled testing of whether a sensitive attribute has a causal influence on predictions or outcomes.
In this spirit, PACC Discovery supports a “probably approximately fair” paradigm, wherein fairness properties hold with high probability and small error under limited data, paralleling how PACC ensures causal correctness. Given its flexibility and resource-aware design, the PACC Discovery framework holds promise as a theoretical foundation for deriving approximate learning guarantees and sample complexity bounds across a wide range of causal algorithms.


\bibliography{pacc}
\bibliographystyle{tmlr}

\appendix

\section{PACC on Self-controlled Case Series Method}
\label{ap:sccs}

\begin{proof}[Proof of Theorem~\ref{th:sccs}]

For each patient $i = 1,2,\cdots, P$ in the cohort, let $d_{i1}$ and $d_{i2}$ represent the total number of days with and without drug exposure respectively. The corresponding number of events occurring during these periods is denoted by $\nu_{i1}$ and $\nu_{i2}$. The incidence rate for patient $i$ is $\exp(\phi_i)$ in the absence of exposure $Z$ and $\exp(\phi_i + \beta)$ when exposed to drug $Z$. 

Under Assumptions~\ref{assmp:nutvc}-\ref{assmp: bino_sccs}, we focus on the setting where SCCS is primarily used in practice. In particular, we assume that the number of events on each patient-day follows a Bernoulli distribution, reflecting the fact that only the first occurrence of a specific event type is recorded per day.
The total number of events $n_{ik}$ over a period of length $\tau_{ik}$ follows a binomial distribution:
\[\cB(\tau_{ik}, \exp(\phi_i)) \quad \text{or} \quad \cB(\tau_{ik}, \exp(\phi_i + \beta_k)).\]

We estimate the parameter $\beta$ by maximizing the SCCS likelihood function~\ref{eq:sccs_full}. 
We adopt the SCCS formulation commonly used in multiple vaccination studies \citep{farrington1995relative,miller1993risk}. This version assumes that both the total observational period $\iota_{all}$ and the exposure period $\iota_{trt}$ are uniform across patients, and that each patient receives a single treatment.

For patient i, if an event occurs during the exposed period, the conditional likelihood is given by:
\[\frac{\iota_{trt} e^\beta}{(\iota_{all}-\iota_{trt}) + \iota_{trt} e^\beta}\]
Conversely, if an event occurs during an unexposed time $\iota_{i,con} \in (0, \iota_{all}-\iota_{trt}]$, the likelihood is given by:
\[\frac{\iota_{i,con}}{(\iota_{all}-\iota_{trt}) + \iota_{trt} e^\beta}\]
Let $\nu_{i1k}$ denote the number of events in time period $k$ for patient $i$ during drug exposure, and define $\nu_{i1} = \sum_k \nu_{i1k}$. Similarly, define $\nu_{i2} = \sum_k \nu_{i2k}$ for events during unexposed periods. 
Therefore, the overall log-likelihood across all patients is: 
    \begin{align*}
        l(\beta) &= \sum_{ik} \nu_{i1k}\log(\frac{\iota_{trt} e^\beta}{(\iota_{all}-\iota_{trt}) + \iota_{trt} e^\beta}) + \sum_{ik} \nu_{i2k} \log(\frac{\Tilde{\iota}}{(\iota_{all}-\iota_{trt}) + \iota_{trt} e^\beta}) \\
        &= constant + \sum_{ik} \nu_{i1k} \beta - \sum_{ik}(\nu_{i1k} + \nu_{i2k})\log((\iota_{all}-\iota_{trt}) + \iota_{trt} e^\beta)
    \end{align*}
Here $\Tilde{\iota}$ represents a function of $\iota_{i,con}$ and the number of events that happen during all unexposed periods, but it is a constant and does not depend on $\beta$. Taking derivative we have:
    \begin{align*}
        \frac{\partial l(\beta)}{\partial \beta} &= \sum_{ik} \nu_{i1k} - \sum_{ik}(\nu_{i1k} + \nu_{i2k}) \frac{\iota_{trt} e^\beta}{(\iota_{all}-\iota_{trt}) + \iota_{trt} e^\beta} \\
    \end{align*}
Finally, we derive the estimate: 
\begin{align*}
    \hat{\beta} &= 
    \log(\frac{\sum_{ik} \nu_{i1k} (\iota_{all}-\iota_{trt})}{\sum_{ik} \nu_{i2k} \iota_{trt}}) \\
    &= \log(\sum_i \nu_{i1}) + \log(\iota_{all}-\iota_{trt}) - \log(\sum_i \nu_{i2}) - \log(\iota_{trt}) \\
    &= \log(\sum_i \frac{ \nu_{i1}}{\iota_{trt}}) - \log(\sum_i \frac{\nu_{i2}}{\iota_{all}-\iota_{trt}}) 
\end{align*}

Let $S_1 = \sum_i \frac{ \nu_{i1}}{\iota_{trt}}$, and $S_2 = \sum_i \frac{\nu_{i2}}{\iota_{all}-\iota_{trt}}$, then $\hat{\beta} = \log(S_1) - \log(S_2)$. 
By Assumption~\ref{assmp: bino_sccs}, each term $\frac{ \nu_{i1}}{\iota_{trt}}$ and $\frac{\nu_{i2}}{\iota_{all}-\iota_{trt}}$ is bounded in $[0,1]$. Applying Hoeffding'e Bound, for any threshold $t>0$, we have:
\[\PP(|S_1 - \EE[S_1]| \geq t) \leq 2\exp(-\frac{2t^2}{P})\]
\[\PP(|S_2 - \EE[S_2]| \geq t) \leq 2\exp(-\frac{2t^2}{P})\]

To connect this with $\hat{\beta}$, we use the inequality:
\[|\log(S_l) - \log(\EE[S_l])| \leq \frac{1}{\min(S_1, \EE[S_l])} |S_l-\EE[S_l]|, \quad l=1,2\]

Since SCCS requires that at least one event happens during the observational period, we can conservatively assume a positive lower bound $\lambda > 0$ such that $\min (S_1, S_2, \EE[S_1], \EE[S_2]) \geq P\lambda$. This allows us to bound the deviation in the log terms as:

\[|\log(S_l) - \log(\EE[S_l])| \leq \frac{1}{P\lambda} |S_l-\EE[S_l]|, \quad l = 1,2\]

If the drug has no effect on the outcome, $\EE(\hat{\beta}) = 0$. If the drug has an effect such that the risk ratio satisfies $\frac{\PP(Y \mid Z)}{\PP(Y \mid \neg Z)} \geq \delta$, then under assumptions, we have $\EE(\hat{\beta}) \geq \log(\delta)$. We use the midpoint $\log(\delta)/2$ as the threshold.

\begin{align*}
   \PP(|\hat{\beta} - \EE[\hat{\beta}]| \geq \frac{\log(\delta)}{2}) &= \PP(| (\log(S_1) - \log(S_2)) - (\EE[\log(S_1)] - \EE[\log(S_2)])|) \geq \frac{\log(\delta)}{2}) \\
   &\leq \PP(|\log(S_1) - \EE(\log(S_1)| \geq \frac{\log(\delta)}{4}) + \PP(|\log(S_2) - \EE(\log(S_2)| \geq \frac{\log(\delta)}{4}) \\
    &\leq \PP(|S_1 - \EE[S_1]| \geq \frac{P \lambda\log(\delta)}{4}) + \PP(|S_2 - \EE[S_2]| \geq \frac{P \lambda\log(\delta)}{4}) \\
    &\leq 4\exp(-\frac{ P\lambda^2 \log^2(\delta)}{8})
\end{align*}

By setting the error probability to be at most $\epsilon$, we obtain the following bound on the required sample size:

\[P \geq  \frac{8}{\lambda^2 \log^2(\delta)} \log\left(\frac{4}{\epsilon}\right)\]

This shows that under the SCCS setup and the specified assumptions, the causal concept of whether $Y$ is $\delta$-dependent on $Z$, is PACC discoverable with a sample size polynomial in $1/\epsilon$ and $1/\delta$.

\end{proof}

\section{PACC on Existing Methods}
\label{exist}

    
    

\subsection{Propensity score results}
\label{ps}

We begin with the presentation and proof of a lemma we will need.

\begin{lemma}[Approximate Rejection Sampling]
\label{le:approximaterejectionsampling}

Let $Z\in \{0,1\}$ be the treatment variable, and let $X$ represent the pre-treatment covariates. Denote by $\cQ$ the distribution over $X$, and let $\cP$ be the true conditional probability model for treatment assignment given $X$, while $\cP'$ is an approximation of $\cP$ generated by logistic regression. 

Define $\cQ\cP$ as the joint distribution over $X$, where $X$ is drawn from $\cQ$ and the treatment assignment $Z=1$ follows the distribution $\cP$. Similarly, define $\cQ\cP'$ for $\cQ$ and $\cP'$. Let $f$ be any function over $X$ such that for all $x \in X$, $0 \leq f(x) \leq M$ where $M$ is a fixed constant. For $0 \leq \epsilon, \gamma \leq 1$, if 
 \[\EE_{X \sim \cQ}(|\cP(Z=1|X)-\cP'(Z=1|X)|) \leq \epsilon \quad \text{ and } \quad \EE_{X \sim \cQ\cP}(f(X)) \leq \gamma M,\]
 then
\[\EE_{X \sim \cQ\cP'}(f(X)) \leq \left( \frac{\epsilon}{\delta} + \gamma \right) \left(\frac{\delta}{\delta-\epsilon} \right)M\]
where $\delta = \int_X \cQ(X)\cP(Z=1|X)dX$ is the marginal probability that $Z=1$. 
\end{lemma}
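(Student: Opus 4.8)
The plan is to unwind both expectations as normalized integrals against the treated-population densities and then compare them term by term. Writing $p(x) = \cP(Z=1 \mid X=x)$, $p'(x) = \cP'(Z=1 \mid X=x)$, and $q(x) = \cQ(X=x)$, I would read $\EE_{X \sim \cQ\cP}(f(X))$ as the average of $f$ over the distribution of covariates among treated units, i.e. $\frac{1}{\delta}\int_X f(x)\,q(x)\,p(x)\,dx$ with $\delta = \int_X q(x)\,p(x)\,dx$, and likewise $\EE_{X \sim \cQ\cP'}(f(X)) = \frac{1}{\delta'}\int_X f(x)\,q(x)\,p'(x)\,dx$, where $\delta' = \int_X q(x)\,p'(x)\,dx$ is the normalizing constant induced by the approximate model. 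The quantity to be bounded is this last ratio, and the two hypotheses supply control on (a) the $L^1(\cQ)$ gap between $p$ and $p'$ and (b) the treated-population average of $f$ under the true model.

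First I would bound the numerator $\int_X f\,q\,p'\,dx$. Adding and subtracting $p$ gives $\int_X f\,q\,p'\,dx = \int_X f\,q\,p\,dx + \int_X f\,q\,(p'-p)\,dx$. The first integral equals $\delta \cdot \EE_{X\sim\cQ\cP}(f)$, which is at most $\gamma M \delta$ by the second hypothesis. For the second integral, $0 \le f \le M$ lets me pull out $M$ and pass to absolute values, yielding $\bigl|\int_X f\,q\,(p'-p)\,dx\bigr| \le M\int_X q\,|p'-p|\,dx \le M\epsilon$ by the first hypothesis. Hence the numerator is at most $M(\gamma\delta + \epsilon)$.

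Next I would bound the denominator $\delta'$ from below. Since $\delta' = \delta + \int_X q\,(p'-p)\,dx$ and the perturbation integral has absolute value at most $\int_X q\,|p'-p|\,dx \le \epsilon$, I obtain $\delta' \ge \delta - \epsilon$. Combining the numerator and denominator estimates gives $\EE_{X\sim\cQ\cP'}(f) \le \frac{M(\gamma\delta+\epsilon)}{\delta-\epsilon}$, and a one-line rearrangement (factoring $\delta$ out of the numerator) rewrites this as $\bigl(\frac{\epsilon}{\delta}+\gamma\bigr)\bigl(\frac{\delta}{\delta-\epsilon}\bigr)M$, which is exactly the claimed bound.

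The step I expect to be the crux is the lower bound on the normalizer $\delta'$: it is what produces the inflation factor $\frac{\delta}{\delta-\epsilon}$, and it is only informative when $\delta > \epsilon$, i.e. when the propensity approximation is accurate relative to the base rate of treatment. Everything else is a routine split-and-bound argument; the one point requiring care is to stay consistent about the normalization convention, since using two \emph{different} normalizers $\delta$ and $\delta'$ (rather than a single shared one) is precisely what makes the ratio bound come out in the stated form.
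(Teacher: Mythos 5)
Your proof is correct and follows essentially the same route as the paper's: both decompose $\int_X f\,q\,p'\,dx$ via the perturbation $p'-p$, bound that perturbation term by $M\epsilon$ using the $L^1(\cQ)$ hypothesis together with $0 \le f \le M$, and then lower-bound the normalizer $\delta' \ge \delta - \epsilon$ to produce the $\frac{\delta}{\delta-\epsilon}$ inflation factor. The only (cosmetic) difference is that you handle the normalizer uniformly via the lower bound, whereas the paper argues by cases that $\delta' = \delta - \epsilon$ is the worst case; your identification of $\delta > \epsilon$ as the condition making the bound informative matches the implicit assumption in the paper.
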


\begin{proof}
Since $\EE_{X \sim \cQ}(|\cP(Z=1|X)-\cP'(Z=1|X)|) \leq \epsilon$ and since probabilities are non-negative, 
\[ \EE_{X \sim \cQ}(\cP'(Z=1|X)-\cP(Z=1|X)) \leq \epsilon \]
We multiply by $f(X)\leq M$ and take expectation:
\[\int_X \cQ(X) f(X) (\cP'(Z=1|X)-\cP(Z=1|X)) dX \leq \epsilon M\]
Therefore, 
\[\int_X \cQ(X) \cP'(1|X) f(X) dX - \int_X \cQ(X) \cP(1|X) f(X) dX \leq \epsilon M\]

\[\frac{1}{\delta}\int_X \cQ(X) \cP'(1|X) f(X) dX - \frac{1}{\delta} \int_X \cQ(X) \cP(1|X) f(X) dX \leq \frac{\epsilon}{\delta}M\]
From the definition of $\cQ\cP$, and $\delta$, notice that:
\[\frac{1}{\delta}\int_X\cQ(X)\cP(1|X)f(X)dX = \EE_{X\sim \cQ\cP}[f(X)]\]
which is at most $\gamma M$ by assumption. Hence,
\begin{equation}
\label{eq: lemma1_upper}
    \frac{1}{\delta} \int_X \cQ(X) \cP'(1|X) f(X) dX \leq \frac{\epsilon}{\delta} M + \gamma M
\end{equation}
Define $\delta' = \int_X \cQ(X) \cP'(1|X)dX$, by the assumption $\EE_{X\sim \cQ}|\cP- \cP'| \leq \epsilon$, it follows that $\delta'$ (the marginal probability of $Z=1$ under $\cQ$ and $\cP'$) cannot be too different from $\delta$. Specifically,
\[\delta-\epsilon \leq \int_X \cQ(X) \cP'(1|X)dX \leq \delta +\epsilon\]
If $\delta' \geq \delta$ then the proof is finished as it only loosens the bound we want. The toughest case is when $\delta'$ is as small as possible, i.e. $\delta - \epsilon$, so we now assume that case. Recall:
\[\EE_{X\sim \cQ\cP'}[f(X)] = \frac{1}{\delta'}\int_X \cQ(X)\cP'(1|X)f(X)dX\]

Using the upper bound from Equation~\ref{eq: lemma1_upper} and substituting $\delta' = \delta - \epsilon$, we have:
\[\frac{1}{\delta - \epsilon} \int_X \cQ(X) \cP'(1|X) f(X) dX \leq \left(\frac{\delta}{\delta - \epsilon} \right) \left( \frac{\epsilon}{\delta} + \gamma \right)M\]

Therefore,
\[\EE_{X \sim \cQ\cP'}(f(X)) \leq \left( \frac{\epsilon}{\delta} + \gamma \right) \left(\frac{\delta}{\delta-\epsilon} \right)M\]
\end{proof}

We now present the proof of Theorem~\ref{thm:ps}:

\begin{proof}[Proof of Theorem~\ref{thm:ps}]

Let $\cI = \{X, Y, Z\}$ be the instance space with $n$ variables, and let $\cP$, $\cQ$, and $\cR$, along with parameters $\epsilon$, $\delta$ and $\gamma$, be defined as in the theorem. Define $\cQ\cP$, and $\cQ\cP'$ as in Lemma~\ref{le:approximaterejectionsampling}. 

\textbf{Step 1: Approximate $\cP$ using a propensity model $\cP'$} 

We require an approximation $\cP'$ for $\cP$ such that:
\begin{equation}
\label{eq:thm16_boundps}
   \PP[\EE_{X\sim\cQ}|\cP(Z=1|X) - \cP'(Z=1|X)| \geq \gamma] \leq \frac{\epsilon}{6} 
\end{equation}

According to \citep{haussler1992decision}, the pseudodimension of any real-valued linear model with $n$ variables is at most $n$, so the pseudodimension of linear families in this case is at most $n$ and it is a permissible function set \citep{haussler1992decision}. \footnote{This is a measurability condition defined in Pollard (1984) \citep{pollard2012convergence}. The logistic functions outputs bounded probability and is Lipschitz continuous, satisfying the bounded differences condition. Furthermore, it has a finite pseudodimension and is integrable. So Logistic regression is permissible. Further details are provided in Appendix Section 9.2 of \citep{haussler1992decision}. }Furthermore, by Theorem 5.1 of \citep{kearns1994efficient}, if the learner $\cL$ draws at least: 
\begin{equation}
\label{eq:thm16_n1}
    N_1 =  \left( \frac{64}{\gamma^2} (2n\log (\frac{16e}{\gamma}) + \log(\frac{48}{\epsilon})) \right) = \cO \left( \frac{1}{\gamma^2}(n\log(\frac{1}{\gamma})+\log(\frac{1}{\epsilon}) \right)
\end{equation}
examples from $\cQ$ and trains a logistic regression model $\cP'$ on these $N_1$ samples, then with probability at least $1-\epsilon/6$, the output $\cP'$ satisfies Equation~\ref{eq:thm16_boundps} as an approximation of $\cP$.

\textbf{Step 2: Rejection Sampling}

We want to create an adjusted sample from the distribution $\cQ\frac{\cP}{\cP'}$. 
Since in observational studies we only have access to the distribution $\cQ\cP$, we achieve this by drawing examples from $\cQ \cP$ and accepting each sample with probability:
\[p_{accept} = \min\{\frac{\text{median}(\cP'(X))}{\cP'(X)}, 1\}\]
We normalize the acceptance probability using the median of propensity scores to ensure it remains within the valid range between $0$ and $1$. 

The expected number of accepted samples is given by $N_2p_{avg}$  where $N_2$ is the initial sample size before rejection sampling, and $p_{avg}$ is the average acceptance probability. Let $N_3$ be the number of samples retained after rejection sampling. Applying the Chernoff Bound, we have:
\[\PP(N_3 \leq (1-\gamma)N_2p_{avg}) \leq \exp(-\frac{\gamma^2 N_2p_{avg}}{2})\]
By setting the probability of failure be smaller than $\epsilon/3$, we obtain: 
\[\gamma \geq \sqrt{\frac{2\log(\frac{3}{\epsilon})}{N_2p_{avg}}}\]
Using the formula $N_2 \geq \frac{N_3}{p_{avg}(1-\gamma)}$, we derive the bound for $N_2$:

\begin{equation}
\label{eq:thm16_n2}
    N_2 \geq \frac{N_3+ \log(3/\epsilon)/2 + \sqrt{2N_3\log(3/\epsilon)+\log(6/\epsilon)}}{\delta}
\end{equation}

Here we replace $p_{avg}$ with $\delta$ based on Assumption 14. Thus, with probability at least $1-\frac{\epsilon}{3}$, we can get the required $N_3$ samples from $N_2$ after rejection sampling. Otherwise, with probability at most $\frac{\epsilon}{3}$, the process halts and fails.

\textbf{Step 3: Calculating ATE}

Next, we compute the average treatment effect (ATE) on the adjusted sample $\cQ\frac{\cP}{\cP'}$. Our goal is to determine whether the treatment effect is at least $\delta$ for the events outcome $Y$ given treatment $Z$. By Agnostic PAC learning theory \citep{kearns1992toward}, this is a learning problem with only two hypotheses in the hypothesis space. ($Y$ is $\delta$-dependent on $Z$ or $Y$ is independent of $Z$.) To achieve an error probability of at most $\epsilon/3$ with a maximum mistake rate of $\gamma$,
we require: 
\[N_3 = \cO\left( \frac{1}{\gamma^2} ( \log(\frac{1}{\epsilon})) \right)\] 
samples. Using Equation~\ref{eq:thm16_n2}, we obtain the bound of $N_2$: 
\[N_2 = \cO\left(\frac{1}{\gamma^3}\log(\frac{1}{\epsilon})\right)\]

Recall Equation~\ref{eq:thm16_boundps} from step 1. Given $\cQ$ is well-behaved, we can generalize it to $|\cP(X)-\cP'(X)| \leq \gamma$. Now we show that $\cB_{0.5}$ and $\frac{\cP}{\cP'}$ are also close. Rewrite:
\[\EE_{X\sim\cQ} \left(\left|\cB_{0.5}(X) - \frac{\cP(X)}{\cP'(X)}\right|\right) = \int_X\left|\cB_{0.5}-\frac{\cP}{\cP'}\right|d\cQ(X)\]
Furthermore, since $|\cP(X)-\cP'(X)| \leq \gamma$, it follows that: 
\[\left|1-\frac{\cP}{\cP'}\right| \leq \frac{\gamma}{\cP'}\]
Given Assumption 14, $\cP'$ is well-behaved and bounded by $\delta$, so we have: 
\[\EE_{X\sim\cQ} \left(\left|\cB_{0.5}(X) - \frac{\cP(X)}{\cP'(X)}\right|\right) \leq \frac{\gamma}{\delta} \]
Applying Lemma~\ref{le:approximaterejectionsampling}, we can conclude that with probability at least $1-\epsilon/6$, the ATE estimate from the adjusted sample differs from the true ATE (estimated from $\cQ\cB_{0.5}$) by at most $2\gamma/\delta$. By the setting of $\gamma$, this difference can be bounded by $\delta/2$. Thus,if the estimated ATE is less than $\delta/2$, we say $Y$ is independent of $Z$, otherwise, we say $Y$ is $\delta$-dependent on $Z$. 

In Step 1, the propensity model may provide a poor approximation $\cP'$ with probability at most \( \frac{\epsilon}{6} \). In Step 2, rejection sampling may discard too many samples, resulting in insufficient examples for subsequent analysis, with probability at most \( \frac{\epsilon}{3} \). In Step 3, PAC learning for ATE estimation on the adjusted sample may fail with probability at most \( \frac{\epsilon}{3} \), while the failure to bound the estimated ATE from the true ATE occurs with probability at most \( \frac{\epsilon}{6} \). Setting \( \gamma = \min\{\epsilon, \delta, \delta^2/4\} \) and combining all failure events, we conclude that with 
\[N_1 + N_2 = \mathcal{O} \left( \frac{1}{\gamma^3} \log \frac{1}{\gamma} \right)\]
examples, the learning algorithm $\cL$ makes an incorrect decision with probability at most \( \epsilon \).  

\end{proof}

\subsection{Instrumental variable results}
\label{2sls}

\begin{proof}[Proof of Theorem~\ref{thm:iv}]
We define the empirical averages as follows:
\begin{align*}
\hat{B_1} = \frac{1}{N} \sum_{i=1}^{N} D_i Y_i, \quad \quad
\hat{B_2} = \frac{1}{N} \sum_{i=1}^{N} D_i Z_i
\end{align*}

Under the Theorem assumptions, and assuming $\EE[D]=0$, we analyze the expectations of $\hat{B_1}$ and $\hat{B_2}$ in two cases. When $\beta \neq 0$, we have: 
\begin{align*}
\EE[\hat{B_1}] &= \EE[DY] = \EE[D(\beta \hat{Z}+\xi_2)] = \beta \alpha \sigma_D^2 \\
\EE[\hat{B_2}] &=\EE[DZ] = \EE[D(\alpha D + \xi_1)]  = \alpha \sigma_D^2
\end{align*}
where $\sigma_D^2 := \text{Var}(D)$, and is assumed to be finite. In contrast, when $\beta = 0$, given $Y = \xi_2$ is independent of $D$, it follows that $\EE[\hat{B_1}] = 0$. 

The 2SLS estimator is defined as:
\begin{align*}
\hat{\beta}_{2SLS} = \frac{\hat{B_1}}{\hat{B_2}} = \frac{\sum_i D_i Y_i}{\sum_i D_i Z_i}
\end{align*}

\textbf{Case 1: $\beta = 0$}

\paragraph{Bound for \(\hat{B_1}\).}

When the true drug effect is $0$, we have $\EE[\hat{B_1}] = 0$. Let $\sigma_{DY}^2 := \mathrm{Var}(D_iY_i)$ be finite. Then the variance of $\hat{B_1}$ is:
\[
\mathrm{Var}(\hat{B_1})
=
\mathrm{Var}\Bigl(\frac{1}{N}\sum_{i=1}^N D_i\,Y_i\Bigr)
=
\frac{\sigma_{DY}^2}{N}
\]
By Chebyshev's inequality, we have:
\[
\PP\bigl(|\hat{B_1} - \EE[\hat{B_1}]|\geq t\bigr)
\leq
\frac{\mathrm{Var}(\hat{B_1})}{t^2}
\]
Since $\EE[\hat{B_1}]=0$, this simplifies to:
\[
\PP\bigl(|\hat{B_1}| \ge t\bigr) \leq
\frac{\sigma_{DY}^2}{Nt^2}
\]
Set $t=\delta\alpha\sigma_D^2/4$, and we aim to ensure that the probability of a large deviation is at most $\epsilon/2$. Solving:
\[
\frac{\sigma_{DY}^2}{Nt^2} \leq \frac{\epsilon}{2}
\quad\Longrightarrow\quad
N_1 \geq
\frac{32\sigma_{DY}^2}{\epsilon\delta^2\alpha^2\sigma_D^4}
\]
Therefore, for any $n_1$ satisfying this inequality, Chebyshev’s inequality guarantees:
\[
\PP\Bigl(|\hat{B_1}|\le\tfrac{\delta\alpha\sigma_D^2}{4}\Bigr)
\ge
1-\frac{\epsilon}{2}
\]

\paragraph{Bound for \(\hat{B_2}\).}
By the instrument relevance condition, $\alpha\ne 0$; without loss of generality, let $\alpha >0$, so that $\EE[\hat{B_2}] = \alpha \sigma_D^2$ is strictly positive.
Let $\sigma_{DZ}^2 := \mathrm{Var}(D_iZ_i)$ be finite. Then the variance of $\hat{B_2}$ is:
\[
\mathrm{Var}(\hat{B_2})
=
\mathrm{Var}\Bigl(\frac{1}{N}\sum_{i=1}^N D_i\,Z_i\Bigr)
=
\frac{\sigma_{DZ}^2}{N}
\]
Using the fact that
\[
\PP\Bigl(\hat{B_2} \ge \frac{\alpha\sigma_D^2}{2}\Bigr)
\geq
\PP\Bigl(|\hat{B_2} - \alpha\sigma_D^2| \le \frac{\alpha\sigma_D^2}{2}\Bigr)
\]
We apply Chebyshev's inequality with $t = \alpha \sigma_D^2/2$ to obtain:
\[
\PP\Bigl(|\hat{B_2} - \alpha\sigma_D^2| \le\frac{\alpha\sigma_D^2}{2}\Bigr)
\ge
1-
\frac{\mathrm{Var}(\hat{B_2})}{\bigl(\alpha\sigma_D^2/2\bigr)^2}
\]
To ensure that this probability is at least $1-\frac{\epsilon}{2}$, we require:
\[
\frac{\sigma^2_{DZ}/N}{\left( \alpha \sigma_D^2/2 \right)^2} \leq \frac{\epsilon}{2}
\quad\Longrightarrow\quad
N_2 \ge\frac{8\sigma_{DZ}^2}{\epsilon\alpha^2\sigma_D^4}
\]
Therefore, for any $n_2$ satisfying this bound, we have:
\[
\PP\Bigl(\hat{B_2}\ge\frac{\alpha\sigma_D^2}{2}\Bigr)
\ge1-\frac{\epsilon}{2}
\]
Combining this result with the earlier bound on $\hat{B_1}$, we conclude that when $\beta = 0$, the 2SLS estimator satisfies:
\begin{align*}
|\hat{\beta}_{2SLS}| = \bigl|\frac{\hat{B_1}}{\hat{B_2}}\bigr| \leq \frac{\delta \alpha \sigma_D^2/4}{\alpha \sigma_D^2/2} = \frac{\delta}{2}
\end{align*}
with probability at least $1-\epsilon$. Thus, the decision rule will correctly conclude that the causal concept $c$ is false (i.e., no causal effect exists), with probability when $\beta = 0$.

\textbf{Case 2: $|\beta| \geq \delta$}

\paragraph{Bound for \(\hat{B_1}\).}
Given $|\beta| \geq \delta$, the expected value of $\hat{B_1}$ satisfies:
\begin{align*}
\EE[\hat{B_1}] = \beta \alpha \sigma_D^2 \geq \delta \alpha \sigma_D^2
\end{align*}

Applying Chebyshev’s inequality with  \(t = \delta\alpha\sigma_D^2/4\), we have:
\[
\PP\Bigl(\bigl|\hat{B_1} - \beta\alpha\sigma_D^2\bigr| \ge \frac{\delta\alpha\sigma_D^2}{4} \Bigr)
\le
\frac{\mathrm{Var}(\hat{B_1})}{(\delta\alpha\sigma_D^2/4)^2}
\]
To ensure this probability is at most  $\epsilon/2$, we require:
\[
\frac{\sigma_{DY}^2/N}{(\delta\alpha\sigma_D^2/4)^2}
\le
\frac{\varepsilon}{2}
\quad\Longrightarrow\quad
N_1 
\ge
\frac{32\sigma_{DY}^2}{\epsilon\delta^2\alpha^2\sigma_D^4}
\]
Therefore, for such $n_1$, with probability at least $1-\epsilon/2$, we have:
\[
\bigl|\hat{B_1} - \beta\alpha\sigma_D^2\bigr|
\le
\frac{\delta\alpha\sigma_D^2}{4}
\]
It follows that:
\[
|\hat{B_1}|
\ge
\bigl|\beta\alpha\sigma_D^2\bigr|
-
\bigl|\hat{B_1} - \beta\alpha\sigma_D^2\bigr|
\ge
\delta\alpha\sigma_D^2
-
\frac{\delta\alpha\sigma_D^2}{4}
=
\frac{3\delta\alpha\sigma_D^2}{4}
\]

\paragraph{Bound for \(\hat{B_2}\).}  
Now set $t = \alpha \sigma_D^2/2$, and apply Chebyshev's inequality to $B_2$:
\[
\PP\left( |\hat{B_2} - \mathbb{E}[B_2]| \leq \frac{\alpha \sigma_D^2}{2} \right) 
\geq 
1-\frac{\mathrm{Var}(\hat{B_2})}{(\alpha \sigma_D^2/2)^2}
\]
To ensure that this probability is at least  $1-\epsilon/2$, we require:
\[
\frac{\sigma_{DZ}^2/N}{(\alpha \sigma_D^2/2)^2} 
\leq \frac{\epsilon}{2}
\quad\Longrightarrow\quad
 N_2 \geq \frac{8 \sigma_{DZ}^2}{\epsilon\alpha^2 \sigma_D^4}.
\]
This guarantees that with high probability,
\[
\hat{B_2} \leq \EE[B_2] + \frac{\alpha \sigma_D^2}{2}  = \frac{3}{2} \alpha \sigma_D^2
\]

Combining this with the earlier bound on $\hat{B_1}$, we conclude that when $|\beta| \geq \delta$, the 2SLS estimator satisfies:
\begin{align*}
|\hat{\beta}_{2SLS}| = \big|\frac{\hat{B_1}}{\hat{B_2}}\big| \geq \frac{3\delta \alpha \sigma_D^2/4}{3\alpha \sigma_D^2/2} = \frac{\delta}{2}
\end{align*}
with probability at least $1-\epsilon$. Thus, the decision rule will correctly output that the causal concept $c$ is true (i.e., a nonzero causal effect exists) whenever $|\hat{\beta}| \geq \delta/2$, with high probability under the case $|\beta| \ge \delta$.

\textbf{Union Bound and Sample Complexity}

To ensure that the 2SLS estimator correctly determines whether the drug effect $\beta$ is $0$ or at least $\delta$, we take a union bound over the two error scenarios (false positive and false negative). Each is controlled to have error at most $\epsilon$. To achieve this, it suffices to choose the sample size $N\ge\max(N_1,N_2)$ such that:

\begin{align*}
N \geq \max\left(
\frac{32\sigma^2_{DY}}{\epsilon \delta^2 \alpha^2 \sigma_D^4},
\frac{8\sigma^2_{DZ}}{\epsilon \alpha^2 \sigma_D^4}
\right)
= \cO \left( \frac{1}{\delta^2 \epsilon} \right),
\end{align*}

Then, under the decision rule:
\begin{itemize}
    \item Output ``$c$ is true'' if $|\hat{\beta}_{2SLS}| > \delta/2$;
    \item Output ``$c$ is false'' if $|\hat{\beta}_{2SLS}| \leq \delta/2$.
\end{itemize}

The algorithm successfully distinguishes between $\beta = 0$ and $|\beta| \geq \delta$ with probability at least $1 - \epsilon$.
\end{proof}

\end{document}